\theoremstyle{plain}
\newtheorem{theorem}{Theorem}[section]
\newtheorem{proposition}[theorem]{Proposition}
\newtheorem{lemma}[theorem]{Lemma}
\theoremstyle{definition}
\theoremstyle{remark}
\newcommand{\argmin}{\text{argmin}}
\newcommand{\argmax}{\text{argmax}}
\newcommand\numberthis{\addtocounter{equation}{1}\tag{\theequation}}
\icmltitlerunning{An Adaptive Sampling Algorithm for Data Market}
\begin{document}

\twocolumn[
\icmltitle{Addressing Budget Allocation and Revenue Allocation in Data Market Environments Using an Adaptive Sampling Algorithm}




\begin{icmlauthorlist}
\icmlauthor{Boxin Zhao}{booth}
\icmlauthor{Boxiang Lyu}{booth}
\icmlauthor{Raul Castro Fernandez}{UchiCS}
\icmlauthor{Mladen Kolar}{booth}
\end{icmlauthorlist}

\icmlaffiliation{booth}{Booth School of Business, University of Chicago, Chicago, IL, USA}
\icmlaffiliation{UchiCS}{Department of Computer Science, University of Chicago, Chicago, IL, USA}

\icmlcorrespondingauthor{Boxin Zhao}{boxinz@uchicago.edu}

\icmlkeywords{Machine Learning, Data Market, Data Valuation, ICML}

\vskip 0.3in
]



\printAffiliationsAndNotice{}  

\begin{abstract}

High-quality machine learning models are dependent on access to high-quality training data. When the data are not already available, it is tedious and costly to obtain them.~\emph{Data markets} help with identifying valuable training data:~model consumers pay to train a model, the market uses that budget to identify data and train the model (the \emph{budget allocation} problem), and finally the market compensates data providers according to their data contribution (\emph{revenue allocation} problem). For example, a bank could pay the data market to access data from other financial institutions to train a fraud detection model. Compensating data contributors requires understanding data's contribution to the model; recent efforts to solve this revenue allocation problem based on the Shapley value are inefficient to lead to practical data markets.

In this paper, we introduce a new algorithm to solve budget allocation and revenue allocation problems simultaneously in linear time. The new algorithm employs an adaptive sampling process that selects data from those providers who are contributing the most to the model. Better data means that the algorithm accesses those providers more often, and more frequent accesses corresponds to higher compensation. Furthermore, the algorithm can be deployed in both centralized and federated scenarios, boosting its applicability. We provide theoretical guarantees for the algorithm that show the budget is used efficiently and the properties of revenue allocation are similar to Shapley's. Finally, we conduct an empirical evaluation to show the performance of the algorithm in practical scenarios and when compared to other baselines. Overall, we believe that the new algorithm paves the way for the implementation of practical data markets.
\end{abstract}

\section{Introduction}

The success of modern machine learning is largely dependent on access to high-quality data.
However, in many scenarios, \emph{consumers} of machine learning (ML) models do not have the necessary data to train them, and collecting data can be impractical; when it is practical it is often tedious and time-consuming. At the same time, data-rich organizations (\emph{providers}) may not be aware that their data would be valuable to prospective consumers.
For example, banks could improve fraud detection if they had access to relevant data from other banks with similar financial products. 
Data markets are a mechanism to address this problem.
A well-functioning \emph{data market} compensates \emph{providers} for the data they supply, and allocates the data to those \emph{consumers}~\citep{fernandez2020data} who need it, thus unleashing the value of data for everyone involved in the transaction. Designing such data markets remains an open problem.

Consider a group of 10 regional banks. One of them (the \emph{consumer}) has detected unusual activity in transactions pertaining to a new credit card product. In response, ML engineers design a fraud detection model for the new product, but realize that they lack training data. These training data could be provided by the other 9 banks, the \emph{providers}, in exchange for some compensation. Clearly, the consumer would like to invest their \emph{budget} in high-quality data that will contribute to better models (the \emph{budget allocation} problem). And providers would like to be compensated according to the value their data brings to the model (the \emph{revenue allocation} problem); e.g., if a bank's data are more relevant to the task at hand, such bank should be compensated higher than the others. In this example, a well-functioning data market aligns the incentives of \emph{consumers} with those of \emph{providers}: it ensures that the budget is used towards obtaining high-quality data.

While there is growing literature on data markets~\citep{fernandez2020data,liu2021dealer,agarwal2019marketplace,li2021data}, no existing data market solves the \emph{budget allocation} and \emph{revenue allocation} problems simultaneously and efficiently. For example, \citet{li2021data} deals with \emph{budget allocation} but does not discuss \emph{revenue allocation}. And there is a plethora of revenue allocation algorithms based on the Shapley value~\citep{shapely1953value} and approximations \citep{agarwal2019marketplace,liu2021dealer,ghorbani2019data,jia2019towards} that, although provide good guarantees on the \emph{fairness} of the allocation, are too computationally expensive to lead to practical solutions.

In this paper, we introduce a new algorithm that solves simultaneously the budget and revenue allocation problems in linear time: much more efficiently than existing revenue allocation approaches. At each training iteration, the algorithm chooses a provider from whom to obtain data. The number of iterations is determined by the budget. Every time the algorithm accesses a provider's data, it compensates that provider using a fixed unit of consumer-provided budget. The key idea behind the algorithm is an \emph{\textbf{O}nline \textbf{S}tochastic \textbf{M}irror \textbf{D}escent (OSMD) sampler }: an adaptive sampling procedure that ensures that the budget is spent on the providers that supply the best data. This means that providers with better data are accessed more often, and thus compensated higher. Most importantly, the new algorithm trains the model only once and has a complexity of $O(n \log n)$, with $n$ being the number of providers. Contrast this with the Shapley value, which must consider the marginal contribution of one provider to all other provider subsets~\citep{deng1994complexity}, leading to a complexity of $\mathcal{O}(2^n)$, and requiring a model training each time.

The key technical intuition behind the new algorithm is that it uses information contained in the optimization/training process to do both revenue allocation and budget allocation.
While the Shapley value (and approximations) treats the optimization process as a black-box and only relies on the output model to do the revenue allocation, the new algorithm instead uses model updates to measure data quality after each iteration. 
This serves as the basis for the new adaptive sampling strategy that ensures that providers with a higher quality history will be accessed more often. By encouraging highly valuable providers to be accessed more, we obtain a better performing model, which then results in better budget allocation.
Crucially, because the new algorithm does not require repeated training, it can be applied to large ML models that are costly to train, which is more practical than previous methods that require multiple re-trainings, and thus are often tested only on simple models, limiting their applicability.

The new algorithm is widely applicable. First, it supports any ML models that are trained in an iterative way. Second, it can be used in centralized and federated scenarios~\citep{kairouz2019advance,mcmahan2017communication}. In centralized scenarios, a central data market platform stores data from providers and trains the model on behalf of consumers. In federated scenarios, data never leave the providers' premise, so it may be more adequate when data are sensitive or subject to regulations. The model is mainly trained by the providers, and the market platform is in charge of running the adaptive sampling algorithm and budget allocation. While in the first scenario, the cost of training the model is absorbed by the data market platform, in the second, it is distributed across providers. We do not deal in this paper with data market architectures or with mechanisms for determining how to use budget to compensate costs. Instead, we note that in both cases, reducing the cost of budget and revenue allocation strictly benefits all scenarios, and we focus on proposing an efficient algorithm.

Finally, we provide theoretical guarantees on both revenue allocation and budget allocation. 
For revenue allocation, we show that the new algorithm has properties similar to the four axioms of the Shapley value~\citep{ghorbani2019data,jia2019efficient}, i.e., efficiency, linearity, null player, and symmetry.
For budget allocation, we show that the average regret of our method compared to spending all budget on the most valuable providers is asymptotically zero as the total budget goes to infinity.
We complement the theoretical guarantees that make the algorithm principled with an extensive empirical evaluation that shows the practical applicability of the new method.

\textbf{Contributions:} We propose an algorithm that achieves revenue allocation and budget allocation at the same time. The proposed algorithm is highly-scalable and achieves similar revenue allocation quality as Shapley-based methods but more efficiently ($O(n \log n)$). Additionally, the new algorithm can be applied to federated learning settings and centralized scenarios seamlessly, increasing its applicability. From a technical perspective, the key technique of the algorithm is an adaptive sampling strategy that uses the information contained in the training process, which is generally ignored by previous research. We provide both theoretical and experimental justifications for our method.

To the best of our knowledge, this is the first highly-scalable method that unifies revenue allocation and budget allocation, thus paving the way to practical data markets.

\textbf{Notation:} For two positive sequences $(a_n)_{n \geq 1}$ and $(b_n)_{n \geq 1}$, we use $a_n=O(b_n)$ to denote that there exists a positive constant $C>0$ such that $a_n \leq C b_n$ when $n$ is large enough.
We use $a_n=o(b_n)$ when $\lim_{n \to \infty} (a_n / b_n) = 0$, and $a_n=o(1)$ when $\lim_{n \to \infty} a_n = 0$.
For $n \geq 1$, we define $\mathcal{P}_{n-1} \coloneqq \left\{ x \in \mathbb{R}^n : \sum^n_{i=1} x_i = 1, x_i \geq 0 \text{ for all } i=1,\ldots,n \right\}$ as the $(n-1)$-dimensional simplex.

\textbf{Organization:} In Section~\ref{sec:related-work}, we summarize the related work. In Section~\ref{sec:prelim}, we introduce our assumptions on the market. We then introduce our proposed method in Section~\ref{sec:OSMD}, and discuss its theoretical guarantees on budget allocation and revenue allocation in Section~\ref{sec:thm-budget} and Section~\ref{sec:thm-rev-alloc}, respectively. The experimental results are in Section~\ref{sec:exp}. We finally conclude our paper with Section~\ref{sec:conclusion}.
The code of the paper is available at:
\url{https://github.com/boxinz17/Data-Market-via-Adaptive-Sampling} .

\section{Related Work}
\label{sec:related-work}

There have been many studies on building a data market from various communities. 
Despite the vast literature and the growing need, there is no practical and principled data market.
We summarize the literature most relevant to our work without attempting to be comprehensive. 
\citet{fernandez2020data} shares the authors' vision of the data market. 
Our project can be regarded as a special case of this vision, where we focus on the data market designed for machine learning applications. 
In addition, \citet{agarwal2019marketplace,li2021data,liu2021dealer} discuss the market design for machine learning, but focus on different aspects from our paper.
\citet{chen2018optimal,kong2020information,zhang2021optimal} offer
complementary work on mechanism design.

Data pricing is a related topic with techniques that are complementary to our paper. \citet{balazinska2011data, lin2014arbitrage, chawla2019revenue, aly2019buy, chen2019towards} discuss pricing mechanisms for data, queries, and models in various problems. See \citet{pei2020survey,fricker2017pricing,liang2018survey} for recent surveys. One may use the pricing methods in the above literature to decide the price of each query of model updates from a data provider or to decide the total number of queries. 

Data valuation is used for revenue allocation and, therefore, is closely related to our work~\citep{koh2017understanding,sharchilev2018finding}. A recent line of research reinterprets the data valuation problem as a cooperative game and applies the Shapley value~\citep{shapely1953value} to quantify the value of data~\citep{ghorbani2019data,jia2019towards,jia2019efficient,ghorbani2020distributional,kwon2021efficient,kwon2022beta}.
In addition, \citet{yoon2020data} proposes a data valuation approach using reinforcement learning (RL). 
The major problem with the above approaches is that they are computationally costly, which restricts their application to simple ML models. Furthermore, the RL-based method needs access to the raw data, and thus is not applicable to privacy-sensitive scenarios such as federated learning.

Adaptive sampling via online learning approach has been applied in optimization literatures~\citep{namkoong2017adaptive,borsos2018online,borsos2019online,el2020adaptive,zhao2021adaptive,zhao2023lsvrg}, where people use it to construct unbiased estimators of the full gradient with an online learning objective to minimize the cumulative variance. In this paper, we adopt a similar idea but the objective is to maximize the cumulative utility gain. While some techniques are similar, this paper solves a fundamentally different problem. 

Finally, several companies have attempted to create practical data markets, such as Scale AI~\citep{scale.ai}, Dawex~\citep{dawex} and Xignite~\citep{xignite}. Our method provides a principled algorithmic approach for practical data markets.

\section{Preliminaries and Market Model}
\label{sec:prelim}

We start by formalizing the three parties involved in the data market: data consumers, data providers, and the market arbiter. We describe their roles and state assumptions on their behavior. Throughout the paper, we focus on serving one data consumer. We discuss extensions to serve multiple consumers in Section~\ref{sec:conclusion}. Furthermore, the design supports a centralized model where providers' data is hosted by the arbiter and the arbiter is in charge of running computation. And it also supports a federated model, where providers' data stay with them and they are responsible for running computation on their own data.

\textbf{Data Consumer} has a machine learning model that needs to be trained. Let $f_w$ denote the model parameterized by $w \in \mathbb{R}^d$.
The data consumer provides a loss function $l$ that can be used to find a suitable model parameter $w$ by minimizing the loss $l(w; \mathcal{D})$, where $\mathcal{D}$ is a data set. The loss function is shared with the arbiter and, in the federated scenario, also with data providers.
In addition to the loss function, the data consumer also provides a utility function $U : \mathbb{R}^d \mapsto [0,1]$ that can be used to measure the quality of the model $f_w$. 
For example, the utility function can measure the quality of the produced model on a hold-out dataset that the data consumer has. We assume that the utility function is shared with the market arbiter but not with the data providers. 
Finally, we assume that the data consumer has a budget $B$, which is the total number of model updates that they can afford. Note that each model update costs the same amount of budget. We discuss extensions at the end of the section.

\textbf{Data Providers} are the market participants that have data that can be used to train machine learning models.
We assume that there are $n$ data providers.
We do not make assumptions on the type of data the providers have. Instead, we assume that each data provider has a \emph{model update oracle}. The $i$th data provider has an oracle function $\mathcal{O}_i:\mathbb{R}^d \mapsto \mathbb{R}^d$
that maps a model parameter $w$ to an update $g_i = \mathcal{O}_i(w)$. 
Each time a data provider uses its model update oracle---locally in the federated scenario, or remotely, through an  arbiter, in the centralized scenario---it will be compensated by one unit out of the total budget.
There can be many options of $\mathcal{O}_i$. For example, $\mathcal{O}_i(w)$ might be the negative full gradient or stochastic gradient of the loss function evaluated at $w$ by using the data of provider $i$; or it can be $w^{+}_i-w$, where $w^{+}_i$ is obtained by running several steps of stochastic gradient descent (SGD) starting from $w$; furthermore, the provider $i$ may also choose some privacy protection techniques when computing $\mathcal{O}_i(w)$, such as differential privacy~\citep{dwork2014algorithmic,chaudhuri2011differentially}. We leave the freedom of choosing $\mathcal{O}_i$'s to the data providers. Different choices of $\mathcal{O}_i$ will involve different levels of computational cost and privacy leakage risk. At one extreme, the data provider can return a random vector in $\mathbb{R}^d$ and thus induces little computational cost and privacy leakage.
However, the design of our algorithm encourages the data provider to provide high-quality model update.

\begin{algorithm*}[t!]
\caption{Adaptive Sampling with Online Stochastic Mirror Descent (OSMD) Sampler}
\label{alg:OSMD-sampler}
\begin{algorithmic}[1]
\STATE {\bfseries Input:} OSMD learning rate $\eta$; optimization stepsizes $\{\gamma_t\}^{T-1}_{t=0}$; parameter $\alpha \in [0,1]$, $\mathcal{A}=\mathcal{P}_{n-1}\cap[\alpha/n,\infty)^n$; total budget $B$, batch size $K$, and the number of communication rounds $T=\lfloor B/K \rfloor$; the initialization sampling distribution $p^0$ and model parameter $w^0$.
\STATE {\bfseries Output:} Trained model parameter $\hat{w}$ and number of access for each data provider $N^T(i)$ for all $i \in [n]$.
\STATE Initialize $N^0(i)=0$ for all $i \in [n]$.
\FOR{$t=0,\ldots,T-1$}{
\STATE Sample a batch of data providers $S^t=\{i^{t,0},\ldots,i^{t,K-1}\} \in [n]^K$ with replacement by sampling distribution $p^t$.
\STATE For each chosen data provider $i \in S^t$, broadcast the current model parameter $w^t$ to it and receive the model update from the provider's model update oracle $g^t_i=\mathcal{O}_i (w^t)$.
\STATE Let $N^{t+1}(i)=N^t(i)+\sum^{K-1}_{k=0}\mathds{1}\{i^{t,k}=i\}$ for all $i \in [n]$.
\STATE Let $\hat{u}^t=(\hat{u}^t_1,\ldots,\hat{u}^t_n)^{\top}$, where
\begin{equation*}
\hat{u}^t_i = \frac{ \sum^{K-1}_{k=0} \mathds{1} \left\{ i^{t,k} = i \right\} }{K p^t_i} \times \left( U \left( w^t + \gamma^t g^t_i \right) - U \left( w^t \right) \right)
\end{equation*}
\STATE Solve $p^{t+1} = \argmin_{q \in \mathcal{A}} - \eta \langle q, \hat{u}^t \rangle + D_{\Phi} (q \, \Vert \, p^t ) $ by Algorithm~\ref{alg:OSMD-sampler-solver}.
\STATE Update $w^t$ by $w^{t+1} = w^t + (\gamma^t/K) \sum_{i \in S^t} g^t_i$.
}
\ENDFOR
\STATE Set $\hat{w}=w^{T}$. 
\end{algorithmic}
\end{algorithm*}

\begin{algorithm*}[!t]
\caption{OSMD Solver}
\label{alg:OSMD-sampler-solver}
\begin{algorithmic}[1]
\STATE {\bfseries Input:} $p^t$, $\hat{u}^t$, $\eta$, $\alpha\in[0,1]$, and $\mathcal{A}=\mathcal{P}_{n-1}\cap[\alpha/n,\infty)^M$.
\STATE {\bfseries Output:} $p^{t+1}$.
\STATE Let $\tilde{p}^{t+1}_i = p^t_i \exp \left( \eta \hat{u}^t_i \right)$ for $i \in [n]$.
\STATE Sort $\{ \tilde{p}^{t+1}_i \}^n_{i=1}$ in a non-decreasing order: $\tilde{p}^{t+1}_{\pi(1)}\leq\ldots\leq\tilde{p}^{t+1}_{\pi(n)}$.
\STATE Let $v_i=\tilde{p}^{t+1}_{\pi(i)} \left( 1 -  \frac{i-1}{n} \alpha \right)$ for $i \in [n]$.
\STATE Let $u_i=\frac{\alpha}{n} \sum^n_{j=i} \tilde{p}^{t+1}_{\pi(j)}$ for $i \in [n]$.
\STATE Find the smallest $i$ such that $v_i > u_i$, denoted as $i^t_{\star}$.
\STATE Let $p^{t+1}_i =
\begin{cases}
\alpha / n & \text{if } \pi^{-1}(i) < i^t_{\star} \\
\frac{ \left((1 - ((i^t_{\star}-1) / n) \alpha ) \tilde{p}^{t+1}_{i}\right) }{  \left(\sum^n_{j=i^t_{\star}} \tilde{p}^{t+1}_{\pi(j)}\right) } & \text{otherwise}.
\end{cases}
$
\end{algorithmic}
\end{algorithm*}

\textbf{Market Arbiter} maintains an iterative training process until the consumer's total budget is exhausted. 
Let $K$ be the number of providers chosen at each round and we discuss the choice of $K$ later in Section~\ref{sec:thm-budget}.
Then the training process proceeds 
in $T=\lfloor B / K \rfloor $ iterations.
Let $w^0$ and $p^0=(1/n,\ldots,1/n)^{\top}$ be the initial model parameter and sampling distribution, respectively. 
For iterations $t=0,\ldots,T-1$, the market arbiter repeats the following process: 
(i) the arbiter uses the sampling distribution $p^t$ to sample with replacement a set, $S^t \in [n]^K$, of $K$ data providers;\footnote{We choose sampling with replacement to obtain a clearer budget allocation analysis. The key intuition is that when using sampling with replacement, the providers in $S^t$ are pairwise independent, allowing us to analyze them independently. In contrast, if we use sampling without replacement, we need to consider $S^t$ as a whole, which is overly complicated given that there are ${n \choose K}$ total possible subsets.} 
(ii) the arbiter broadcasts the current model parameter $w^t$ to the $i$th provider if $i \in S^t$;\footnote{Here broadcast is a local operation in the central setting where providers' data is directly accessible to the market, and a distributed operation in the federated setting.} 
(iii) the $i$th provider, $i \in S^t$, computes the model update $g^t_i = \mathcal{O}_i(w^t)$;
(iv) the arbiter updates the sampling distribution based on the received marginal utilities $\{U(w^t+\gamma^t g^t_i)\}_{i \in S^t}$ to obtain $p^{t+1}$, where $\gamma^t$ is the optimization stepsize; 
(v) the arbiter updates the model parameter as $w^{t+1} \leftarrow w^t + (\gamma^t / K) \sum_{i \in S^t} g^t_i$.
Algorithm~\ref{alg:OSMD-sampler} details the process.

We detail the step (iv) in the next section. 
The training process executed the market arbiter is the same as the training process in federated learning~\citep{kairouz2019advance} if $p^{t+1}=p^t$ in step (iv).
Thus our method can be applied to the setting where providers may have privacy concerns or are physically distributed. 

Finally, we note that each access costs the same fixed amount of budget. This makes sense because in the context of ML training, a single update does not have a major impact on the result. However, it is conceivable to introduce pricing techniques to price each access differently. We do not deal with this scenario in this paper.

\section{Online Stochastic Mirror Descent Sampler}
\label{sec:OSMD}

We detail step (iv) of Algorithm~\ref{alg:OSMD-sampler} where the sampling distribution is updated. The sampling distribution is a crucial ingredient in the design of Algorithm~\ref{alg:OSMD-sampler} as it connects to budget and revenue allocation. 
We start by examining the expected utility gain in the round $t$. 
We would like to choose the set of providers $S^t$ to maximize the utility gain 
\begin{multline*}
\Delta U^t \coloneqq U \left( w^{t+1} \right) - U \left( w^t \right) \\
= U \left( w^t + \frac{\gamma^t}{K} \sum_{i \in S^t} g^t_i \right) - U \left( w^t \right).
\end{multline*}
Maximizing the utility gain over $S^t$ is hard as it is a combinatorial problem. More specifically, the utility gain of selecting one provider depends not only on the provider itself but also on the other providers chosen in $S^t$. As a result, one needs to consider each subset of $[n]$, leading to a total of ${n \choose K}$ choices. The size of this space induces significant challenge.

To overcome this issue, one can consider the surrogate loss defined by \emph{cumulative marginal utility gain}:
\begin{equation}
\label{eq:cum-marg-util}
\Delta U^t_\text{cm} \coloneqq \sum_{i \in S^t} U \left( w^t + \gamma^t g^t_i \right) - U \left( w^t \right).
\end{equation}
The advantage of marginal utility is that when selecting one provider, the utility gain depends solely on that provider and is independent of other providers. In this way, we can treat each provider independently, resulting in a total of $n$ choices, which is much smaller than ${n \choose K}$ for $K > 1$. The surrogate loss simplifies the analysis and also leads to more efficient algorithm design. When $K=1$, we have $\Delta U^t_\text{cm}=\Delta U^t$; when $K > 1$, a larger $\Delta U^t_\text{cm}$ also generally implies a larger~$\Delta U^t$.

Let $\Delta U^t_i = U ( w^t + \gamma^t g^t_i ) - U ( w^t )$ and $p^t=(p^t_1,\ldots,p^t_n)^{\top}$. Recall that the set $S^t$ is obtained by sampling with replacement from $p^t$ and, thus, we have
\begin{equation*}
\mathbb{E}_{S^t} \left[  \Delta U^t_\text{cm} \right] = K \sum^n_{i=1} p^t_i \Delta U^t_i,
\end{equation*}
where $\mathbb{E}_{S^t}[\cdot]$ denotes the expectation taken with respect to $S^t$. Therefore, the distribution $p^t$ can be chosen to maximize $\mathbb{E}_{S^t} [  \Delta U^t_\text{cm} ]$ when $\Delta U^t_1,\ldots,\Delta U^t_n$ are known.

In practice, obtaining $\Delta U^t_1,\ldots,\Delta U^t_n$ requires access to the model update oracle of each provider, which is too costly. 
Furthermore, since $S^t$ is sampled based on $p^t$, the information will not be revealed until we make a decision on the sampling distribution.
In each round, the arbiter has a partial information history $\{\{ \Delta U^l_i \}_{i \in S^l}\}^{t-1}_{l=0}$ and $\{p^l\}^{t-1}_{l=0}$, based on which it updates the sampling distribution. We do not make any assumptions on how the utility is generated; thus it can be dependent on or even adversarial to the arbiter's decision.

The sampling process can be viewed as an online learning task with partial information feedback, where a game is played between the arbiter and the providers.
The goal of the online learning task is to choose a sequence of sampling distributions to maximize $\mathbb{E} [ \sum^{T-1}_{t=0} \Delta U^t_\text{cm} ]$.
A key concern about this online learning task is the non-stationarity of the utility process $\{ \Delta U^t_i \}_{t \geq 1}$, $i \in [n]$. More specifically, data providers that are useful in the early stage of the training process are not necessarily as useful in the later stage. To understand why, suppose that the model parameter is randomly initialized and far away from a good model parameter. At this stage, even a noisy data set can help improve the model by providing a rough guidance --- the additional advantage of a high-quality data set may not be obvious at this stage as improving the model is easy. However, as the training proceeds, the model will achieve reasonably good performance, and further improvement of the model will require higher quality data --- the difference between the utilities of valuable and noisy providers now becomes more distinctive.

Based on the above discussion, we propose to use online stochastic mirror descent (OSMD)~\citep[Chapter 31]{lattimore2020bandit} to make update of the sampling distribution.
Let $D_{\Phi} \left(x \,\Vert\, y\right) = \Phi (x) - \Phi (y) - \langle \nabla \Phi (y), x - y \rangle$ for $x,y \in \mathbb{R}^n_{+}$ be the Bregman divergence, where $\Phi(x)=\sum^n_{i=1} x_i \log x_i $  with $0\log0$ defined as $0$, is the unnormalized negative entropy.
Then the key updating rule is achieved by solving the following optimization problem:
\begin{equation}
\label{eq:OSMD-update}
p^{t+1} = \argmin_{q \in \mathcal{A}} - \eta \langle q, \hat{u}^t \rangle + D_{\Phi} (q \, \Vert \, p^t )
\end{equation}
where $p^t$ is the current sampling distribution, $\hat{u}^t$ is an unbiased estimate of the utility vectors $(\Delta U^t_1,\ldots, \Delta U^t_n)^{\top}$, and $\eta$ is a chosen learning rate.
We restrict the sampling distribution in a clipped simplex $\mathcal{A}=\mathcal{P}_{n-1}\cap[\alpha/n,\infty)^n$, where $\alpha \in [0,1]$ is a tuning parameter. 
By restricting the sampling distribution to be away from zero, we avoid the algorithm from committing too hard on a single provider, which otherwise may prevent the discovery of change points in a non-stationary environment.
The first part of the objective encourages the sampling distribution to maximize the most recent utility gain, while the second part of the objective prevents it from deviating too far away from the previous sampling distribution.  The detailed algorithm is described in Algorithm~\ref{alg:OSMD-sampler}. The problem in \eqref{eq:OSMD-update} can be solved efficiently by Algorithm~\ref{alg:OSMD-sampler-solver} (Proposition~\ref{prop:OSMD-solver} in Appendix~\ref{sec:proof-prop-1}). The most costly step of Algorithm~\ref{alg:OSMD-sampler-solver} is sorting $\{ \tilde{p}^{t+1}_i \}^n_{i=1}$, which has a computational complexity of $O(n\log n)$. However,
noting that most of the entries in $\tilde{p}^{t+1}$ are sorted in the previous round, we may use an adaptive sorting algorithm to achieve a much faster running time~\citep{estivill1992survey}.

\subsection{Analysis of Budget Allocation}
\label{sec:thm-budget}

We provide theoretical guarantees on budget allocation in the form of a regret bound when comparing against an oracle competitor. 
At the beginning of round $t$, we allow the competitor to know the provider with the largest $\Delta U^t_i$; thus, to maximize $\Delta U^t_\text{cm}$, the competitor will choose it $K$ times in round $t$.
Due to the non-stationary nature of the problem, we also allow the oracle to change its decision from iteration to iteration, with the total number of changes not exceeding $m-1$ times, where $m$ is a chosen tuning parameter. Note that with a larger $m$, we are competing against a more flexible and harder oracle.

For $m \geq 1$, the action space $\Gamma_m$ is defined as 
\begin{multline*}
\Gamma_m \coloneqq \left\{ \left( a^0,\ldots,a^{T-1} \right) \in [n]^T : \right. \\
\left. \sum^{T-1}_{t=1} \mathds{1} \left\{  a^t \neq a^{t-1} \right\} \leq m - 1 \right\}.
\end{multline*}
The space $\Gamma_m$ contains all action sequences where the action changes at most $m-1$ times. 
This space contains the action sequence of an oracle that knows the identities of the top $m$ providers and the right order to access them. 
We define regret of the budget allocation of Algorithm~\ref{alg:OSMD-sampler} as
\begin{equation}
\label{eq:regret}
R_B \coloneqq \mathbb{E} \left[ \max_{ (a^t) \in \Gamma_m } K \sum^{T-1}_{t=0} \Delta U^t_{a^t} -  \sum^{T-1}_{t=0} \Delta U^t_\text{cm} \right],
\end{equation}
where $\Delta U^t_\text{cm}$ is defined in~\eqref{eq:cum-marg-util} and the expectation is taken with respect to all sources of randomness. 
From the consumer's perspective, a larger utility gain means a better budget allocation and, therefore, regret $R_B$ measures the loss of budget due to the lack of quality information about providers. 
We have the following guarantee on the regret of Algorithm~\ref{alg:OSMD-sampler}.
\begin{theorem}
\label{thm:regret-bound}
Assume $B=K T$ and $U(\cdot) \in [0, 1]$.
Let $\{p^t\}_{t=0}^{T-1}$ be the sequence of sampling distribution generated by Algorithm~\ref{alg:OSMD-sampler}.
The regret $R_B$ defined in~\eqref{eq:regret} is bounded by
\begin{equation*}
R_B \leq \alpha B  + \frac{\eta n B}{2} + \frac{m K \log(n/\alpha)}{\eta}.
\end{equation*}
\end{theorem}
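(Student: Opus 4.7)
The plan is to reduce $R_B$ to a per-round tracking regret, and then attack the latter by a block-decomposed OSMD analysis against smoothed comparators in $\mathcal{A}$. First, since $S^t$ is drawn i.i.d.\ with replacement from $p^t$, one has $\mathbb{E}[\Delta U^t_\text{cm}\mid\mathcal{F}_{t-1}] = K\langle p^t,\Delta U^t\rangle$, where $\mathcal{F}_{t-1}$ is the $\sigma$-algebra generated by the history before round $t$ and $\Delta U^t=(\Delta U^t_1,\ldots,\Delta U^t_n)^\top$. Because $\sum_t \Delta U^t_\text{cm}$ does not depend on $(a^t)$, pulling a constant out of the max and using the tower property gives the identity
\[ R_B = K\cdot\mathbb{E}\Bigl[\max_{(a^t)\in\Gamma_m}\sum_{t=0}^{T-1}\langle e_{a^t}-p^t,\Delta U^t\rangle\Bigr], \]
so it suffices to bound the bracketed per-round tracking regret and then scale by $K$ at the end.

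For any comparator sequence $(a^t)\in\Gamma_m$, partition $\{0,\ldots,T-1\}$ into $m$ consecutive blocks $I_1,\ldots,I_m$ on each of which $a^t$ equals a single arm $i_j$. Since the iterates live in $\mathcal{A}$, the natural per-block comparator is not the point mass $e_{i_j}$ but its clipping $q^j\in\mathcal{A}$, defined by $q^j_{i_j}=1-(n-1)\alpha/n$ and $q^j_i=\alpha/n$ for $i\neq i_j$. Using $U(\cdot)\in[0,1]$ so that $|\Delta U^t_i|\leq 1$, a direct calculation bounds the clipping cost $\langle e_{i_j}-q^j,\Delta U^t\rangle$ by a multiple of $\alpha$ per round, which contributes the $\alpha B$ term of the theorem after the final scaling by $K$. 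It then remains to bound $\sum_{t\in I_j}\langle q^j-p^t,\Delta U^t\rangle$ uniformly in $j$.

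On each block $I_j$, the mirror-descent update \eqref{eq:OSMD-update} with the unnormalized negative entropy yields the standard OSMD inequality
\[ \sum_{t\in I_j}\langle q^j-p^t,\hat{u}^t\rangle \leq \frac{D_\Phi(q^j\Vert p^{t_j})}{\eta}+\frac{\eta}{2}\sum_{t\in I_j}\sum_i p^t_i(\hat{u}^t_i)^2, \]
where $t_j$ is the first index of $I_j$. Taking expectations, the left-hand side equals $\sum_{t\in I_j}\langle q^j-p^t,\Delta U^t\rangle$ because $m^t_i:=\sum_k\mathds{1}\{i^{t,k}=i\}\sim\mathrm{Bin}(K,p^t_i)$ makes $\hat{u}^t$ unbiased for $\Delta U^t$. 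Two short computations then close the argument: (i) the lower bound $p^{t_j}_i\geq\alpha/n$ gives $D_\Phi(q^j\Vert p^{t_j})\leq\log(n/\alpha)$, so summing over the $m$ blocks and scaling by $K$ yields the $mK\log(n/\alpha)/\eta$ term; (ii) a binomial second-moment calculation gives $\mathbb{E}[p^t_i(\hat{u}^t_i)^2\mid\mathcal{F}_{t-1}] = (1+(K-1)p^t_i)(\Delta U^t_i)^2/K$, from which $\sum_i\mathbb{E}[p^t_i(\hat{u}^t_i)^2\mid\mathcal{F}_{t-1}]\leq (n+K-1)/K\leq n$ by the elementary inequality $(n-1)(K-1)\geq 0$, contributing the $\eta nB/2$ term after summing over $t$ and scaling by $K$.

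The main obstacle is establishing the per-step stability inequality $D_\Phi(p^t\Vert p^{t+1})\leq(\eta^2/2)\sum_i p^t_i(\hat{u}^t_i)^2$ for the \emph{constrained} mirror-descent step onto $\mathcal{A}=\mathcal{P}_{n-1}\cap[\alpha/n,\infty)^n$, rather than onto the full simplex. The usual proof works directly with the unconstrained exponential-weights iterate $\tilde{p}^{t+1}_i\propto p^t_i\exp(\eta\hat{u}^t_i)$; in the clipped setting one additionally needs a Bregman-projection Pythagorean inequality, made explicit by Algorithm~\ref{alg:OSMD-sampler-solver}, to guarantee that projecting $\tilde{p}^{t+1}$ onto $\mathcal{A}$ only decreases the divergence from any $q^j\in\mathcal{A}$, so that the usual telescoping argument carries through unchanged.
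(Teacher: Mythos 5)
Your proposal follows essentially the same route as the paper's proof: the same reduction of $R_B$ to a per-round tracking regret, the same decomposition into the $m$ switching blocks, the same clipped comparator in $\mathcal{A}$ costing $O(\alpha)$ per round, the same entropy-radius bound $D_{\Phi}(q^j\,\Vert\,p^{t_j})\leq\log(n/\alpha)$, and the same local-norm variance term producing $\eta n B/2$. The only differences are cosmetic: you close the stability step via a Bregman--Pythagorean projection inequality where the paper instead manipulates the three-point identity to bound $D_{\Phi}(p^t\,\Vert\,\tilde{p}^{t+1})$ directly, and your explicit binomial second-moment computation giving $\sum_i \mathbb{E}[p^t_i(\hat{u}^t_i)^2]\leq (n+K-1)/K\leq n$ is in fact more careful than the paper's pointwise assertion of the same bound.
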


If $\alpha=\sqrt{n/B}$ and $\eta=\sqrt{m \log(n B)/(n T)}$, it follows that $R_B / B = O\left( \sqrt{n m K \log (n B) / B} \right)$.
Treating $n$, $K$, $m$ as constants, we have that the asymptotic average regret is zero, $R_B/B=o(1)$, which is used as a standard to measure budget allocation~\citep{agarwal2019marketplace}. By Theorem~\ref{thm:regret-bound}, we see that a larger $K$ will induce a larger regret within the same budget; however, a larger $K$ is helpful in reducing the variance of the model parameter. Thus, a good choice of $K$ should maintain a trade-off between these two concerns.
Note that we do not provide specific quantification of how the variance depends on $K$. The reason is that the variance of the model parameter depends on numerous factors, such as the loss function, the learning rate, and the model update oracle. We intentionally leave these factors unspecified to maintain generality. That being said, a simple example for reference is the use of mini-batch SGD or local SGD as the model update oracle. In this case, the convergence analysis from~\citep{karimireddy2020scaffold} can be adopted. It then becomes evident that the number of providers we select in each round will influence the convergence speed through the variance.

\subsection{Analysis of Revenue Allocation}
\label{sec:thm-rev-alloc}

Data providers furnish their data with the expectation of receiving compensation, thus the equitable distribution of revenue among providers is a crucial consideration for data markets. This differs from the budget allocation issue, which we framed as an online optimization problem. Revenue allocation, on the other hand, falls within the scope of a "fair division" problem ~\citep{wiki:fairdivision}. Generally, there is no universally acknowledged standard for resolving revenue division dilemmas, as opposed to online optimization scenarios. A prevalent strategy is to first establish each participant's "entitlement" and subsequently distribute the revenue in accordance with this assigned entitlement.

Existing research primarily applies the Shapley value~\citep{shapely1953value} to determine entitlement, given its attributes that are considered essential for equitable allocation in cooperative games. The Shapley value is widely acknowledged to singularly satisfy four axioms of a fair reward distribution in cooperative game theory, namely: symmetry, efficiency, null player, and linearity~\citep{shapely1953value,kwon2022beta}. Nevertheless, the computation of the Shapley value is considerably resource-intensive. Consequently, the total funds available for allocation are reduced by the computational expense incurred in deriving the Shapley value itself. Furthermore, the relevance and necessity of these axioms for revenue allocation within real-world data markets remain a matter of debate.

In this paper, we introduce a novel concept of entitlement referred to as the \emph{number of accesses to the model update oracle}. By correlating the compensation a data provider receives with the quantity of model update oracles it contributes, the compensation aligns with the level of service provided. We posit that this methodology offers a fresh perspective on the study of revenue allocation.

Given that our model does not frame revenue allocation as a cooperative game, our approach diverges from the typical analytical framework associated with the Shapley value. Nevertheless, we demonstrate that our revenue allocation method possesses certain attributes that echo the underlying principles of the Shapley value's axioms.
\begin{itemize}
    \item Symmetry: The symmetry axiom basically requires the evaluation result is invariant to random shuffling of the providers. Note that by the design of Algorithm~\ref{alg:OSMD-sampler}, it is clear that the distribution of $N^T(i)$ for any $i \in [n]$ will not change by shuffling the providers, thus our revenue allocation approach enjoys the symmetry property.
    \item Efficiency: Efficiency axiom requires that the sum of values of all providers should be equal to the value of the whole dataset. In our case, by the definition of $N^T(i)$, we have $\sum^n_{i=1}N^T(i)$ equal the budget for training. Thus, our revenue allocation approach also shares certain efficiency property.
    \item Null Player: This axiom requires to assign zero value to data providers that make no contribution. In our case, any data provider has a chance to get compensation no matter how much contribution it makes. However, our revenue allocation approach ensures that providers making no contribution will get asymptotically minimum compensation. To see this, assuming that there exists an $i \in [n]$, such that $U(w +\gamma\mathcal{O}_i(w)) \leq U(w)$ for any $w \in \mathbb{R}^d$ and $\gamma>0$ (which indicates that provider $i$ cannot make positive contribution), then by the design of Algorithm~\ref{alg:OSMD-sampler}, we will have $p^1_i\geq p^2_i \geq \ldots \geq \alpha/n$. This means that the probability of accessing provider $i$ is asymptotically decreasing to the lower bound. When $\alpha$ is small, the compensation received by provider $i$ will be negligible. 
\end{itemize}

By the intrinsic design of our algorithm, it is hard for our revenue allocation approach to satisfy the linearity axiom; however, it is unclear why linearity matters in practical applications.
In fact, previous research also tries to remove some of the four axioms for a more practical solution. For example, \citet{kwon2022beta} removes the efficiency axiom and proposes a variant of Shapley value which has better performance on many practical machine learning tasks.
Here we relax the linearity axiom for a similar reason.

\section{Evaluation}
\label{sec:exp}

We evaluate our method empirically on both real-world data and synthetic data experiments. 
We concentrate on three main questions. 
First, for revenue allocation, we ask whether our method provides similar allocation quality as the state-of-the-art methods based on Shapley value. For this purpose, we choose two Shapley value based methods as the baseline methods: Gradient Shapley (G-Shapley)~\citep{ghorbani2019data} and Federated Shapley value with permutation sampling estimation (FedSV-PS)~\citep{wang2020principled}. These two methods do not require re-training of the model multiple times, as required by most of the Shapley value based methods, and are suitable for modern large-scale machine learning models like Deep Neural Networks (DNN).
Second, for budget allocation, we ask whether our method delivers better performing model than the standard training procedure based on uniform sampling.
Finally, we study the runtime of our method. We compare it with uniform sampling, G-Shapley and FedSV-PS on real-world data sets.

\subsection{Main Results}
\label{sec:real-data-exp}

\begin{figure}[t!]
\centering
\includegraphics[width=\columnwidth]{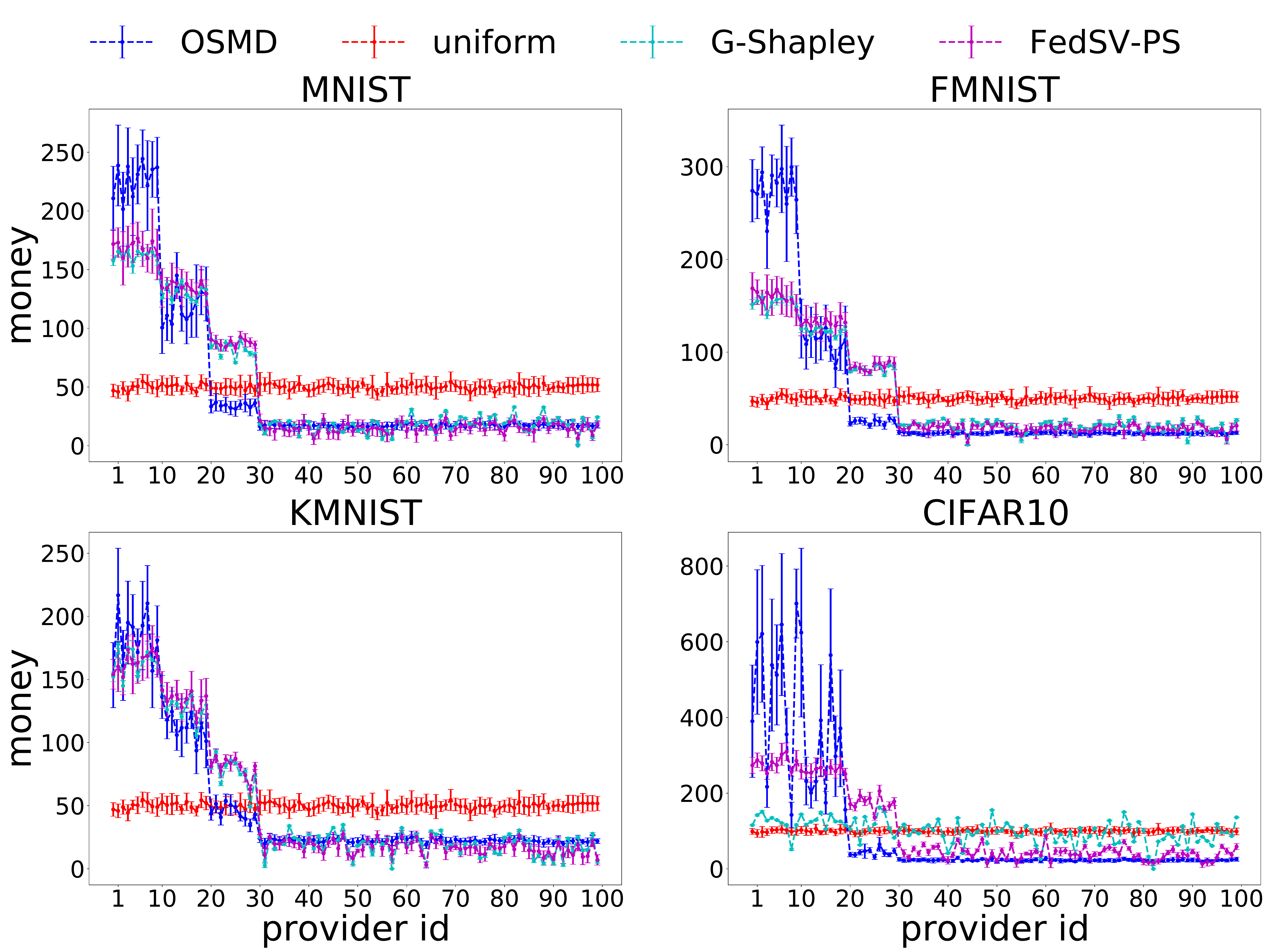}
\caption{Revenue allocation result of OSMD (our method), uniform sampling, Gradient Shapley (G-Shapley) and Federated Shapley value with permutation sampling estimation (FedSV-PS). The solid dot denotes the mean and the error bar denotes the mean $\pm$ standard deviation.}
\label{fig:real-money}
\end{figure}

We implement the experiments on four image classification task data sets: MNIST~\citep{lecun-mnisthandwrittendigit-2010}, Fashion-MNIST (FMNIST)~\citep{xiao2017fashion}, Kuzushiji-MNIST (KMNIST)~\citep{clanuwat2018deep}, and CIFAR-10~\citep{krizhevsky2009learning}. We create $100$ data providers with each provider having $400$ training samples. We then use another $1000$ samples from the training set as the hold-out set to compute the utility, which is defined as the negative loss on the hold-out set. Finally, we use the testing set which contains $10,000$ samples to measure the accuracy of the model. 
See Appendix~\ref{sec:detail-real-world-exp} for more details about data preparation.
For MNIST, FMNIST and KMNIST, we train a two-layer neural network with one-hidden layer of $300$ neurons; for CIFAR-10, we train a Convolutional Neural Network (CNN) with one convolutional layer and two feed-forward layers. Each method is repeated $10$ times with different random seeds. 

To create quality discrepancies across providers, we corrupt $\beta\%$ of the providers' samples by changing a sample's correct label to a randomly chosen wrong label. For provider $1$--$10$, $\beta=0$; for provider $11$--$20$, $\beta=20$; for provider $21$--$30$, $\beta=50$; for the rest of the providers, $\beta=90$. Additional details are provided in Appendix~\ref{sec:detail-real-world-exp}.

\textbf{Revenue Allocation:} We first examine whether our method achieves similar revenue allocation results as Shapley-based mehtods (we will show later it solves the problem much faster). For our method and uniform sampling, the revenue is allocated proportional to the number of accesses, while for G-Shapley and FedSV-PS, the revenue is allocated proportional to the positive part of Shapley value, defined as $\max\{\text{SV},0\}$, where $\text{SV}$ denotes the Shapley value. The result is shown in Figure~\ref{fig:real-money}. We see that OSMD has a similar revenue allocation performance as G-Shapley and FedSV-PS on MNIST, KMNIST and FMNIST. OSMD tends to allocate more revenue on the most valuable providers while G-Shapley and FedSV-PS seem to keep a balance between different quality providers. Interestingly, we see that G-Shapley underperforms on CIFAR-10. This might be because the task is difficult, and with a high portion of data being corrupted, the one-step gradient approximation does not work well. The take-away message is that the new method's allocation closely resembles that of Shapley-based methods.

\textbf{Budget Allocation:} We study the performance of budget allocation. We show the model accuracy on the hold-out test set along the training process. The results in Figure~\ref{fig:real-test-accu} show that OSMD successfully helps the model obtain a better performance than uniform sampling by identifying the valuable providers and adjusting the sampling distribution accordingly: the new algorithm uses the budget efficiently.

\textbf{Computational Cost:} We also compare the runtime (wall clock time) of the four methods mentioned above. Higher runtimes imply higher costs on the arbiter side, thus reducing runtime is a crucial concern in designing practical data market, as the higher the costs, the higher the proportion of budget that needs to be used to cover those costs and thus the worse the models delivered to consumers. To measure the scalability of different methods, we fix the number of samples that each provider has to $100$ and increase the number of providers from $50$ to $400$. The result is shown in Figure~\ref{fig:real-time-used}. Compared with G-Shapley and FedSV-PS, our method is much faster, only marginally more costly than uniform sampling and highly scalable. This result indicates that our method is more suitable to modern ML applications where the size of both dataset and model is large.

\begin{figure}[t!]
\centering
\includegraphics[width=\columnwidth]{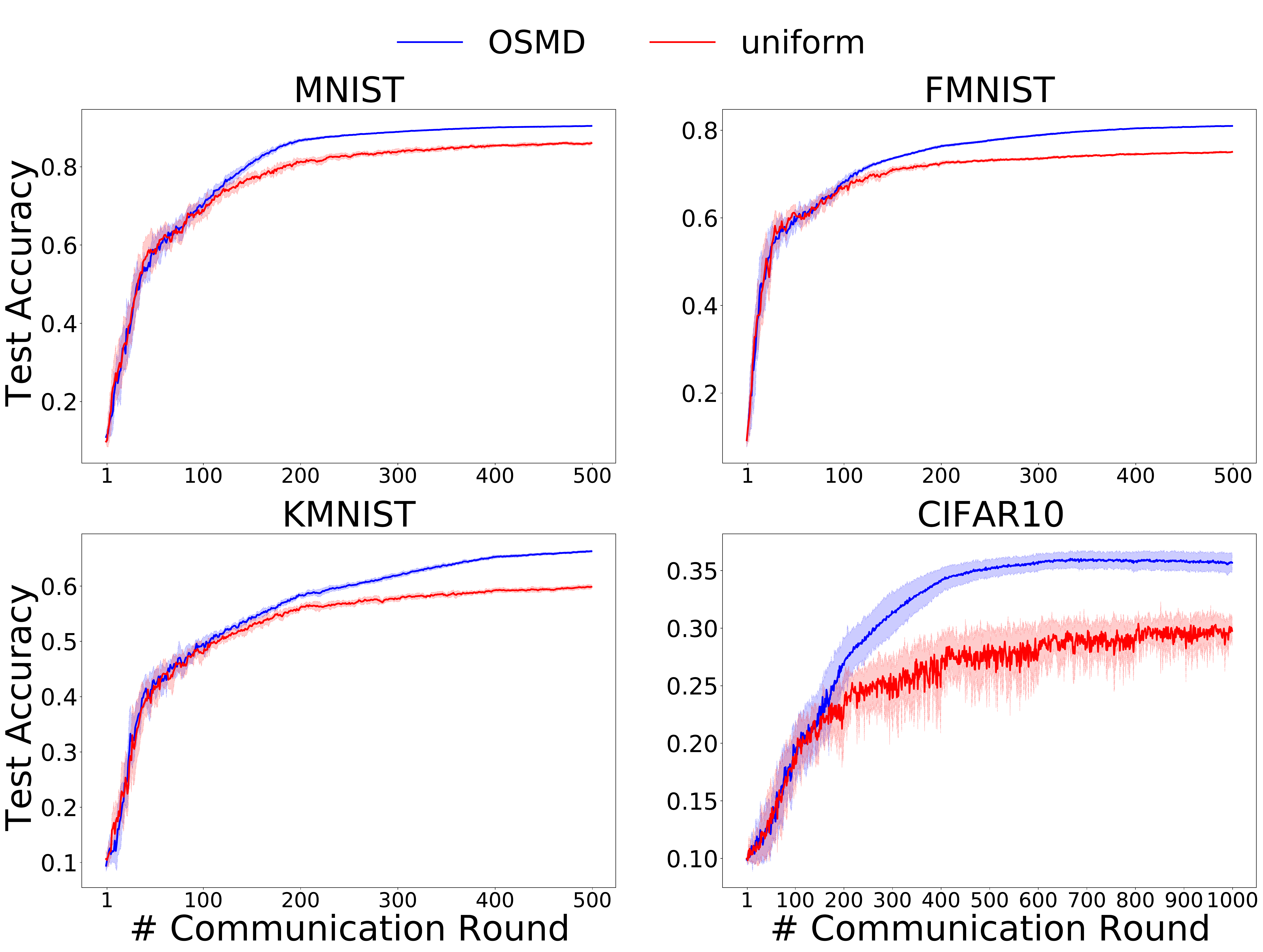}
\caption{Budget allocation result of OSMD (our method) and uniform sampling. The solid line denotes the mean and the shadow region denotes the mean $\pm$ standard deviation.}
\label{fig:real-test-accu}
\end{figure}

\begin{figure}[t!]
\centering
\includegraphics[width=\columnwidth]{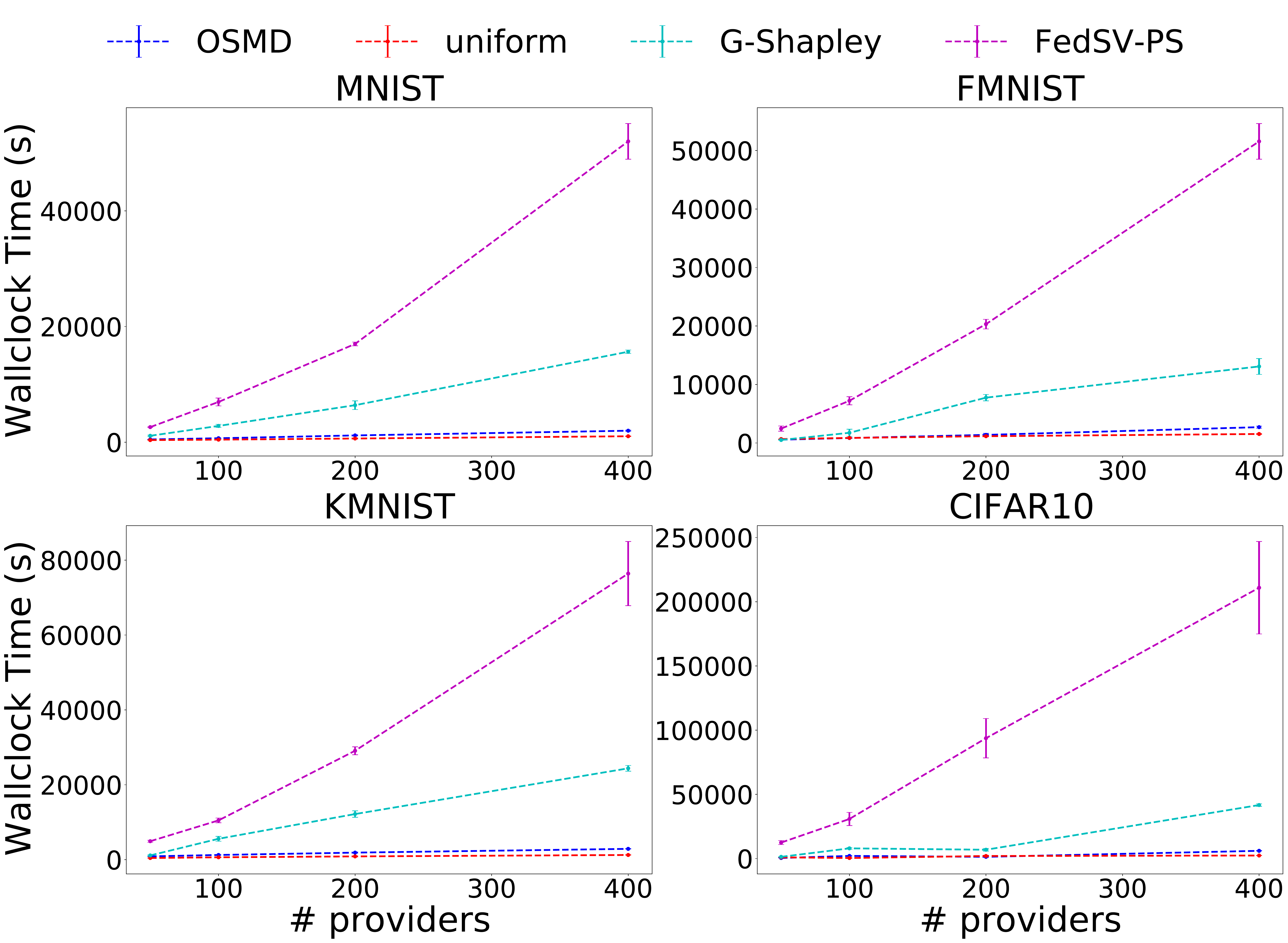}
\caption{Runtime of OSMD (our method), uniform sampling, Gradient Shapley (G-Shapley) and Federated Shapley value with permutation sampling estimation (FedSV-PS). The solid dot denotes the mean and the error bar denotes the mean $\pm$ standard deviation.}
\label{fig:real-time-used}
\end{figure}

\subsection{Mixture Linear Regression}
\label{sec:mixture-linear-reg}

In the previous section, the difference of data quality between providers is created by corrupting the label. In this section, we present a different approach to create such difference by creating distribution shift of $\mathbb{P}(y \mid x)$.
More specifically, we implement a mixture linear regression experiment.
Assume that data consumer wants to train a linear regression model. Both data consumer's and data providers' features are drawn from the same distribution $\mathbb{P}_X$; however, the response variable is generated by the following mixture linear regression model:
\begin{align*}
& x_{ij} \in \mathbb{R}^d \sim \mathbb{P}_X, \quad  j = 1, \ldots, m_i, i=0,1,\ldots,n, \\
& z_0,z_1,\ldots,z_n \sim {\rm uniform}([K]), \\
& y_{ij} = \langle w_{z_i}, x_{ij} \rangle + \varepsilon_{ij}, \, \varepsilon_{ij} \overset{i.i.d.}{\sim} N(0,0.5^2), \\
& \qquad \qquad\qquad \qquad j = 1, \ldots, m_i,\ i=0,1,\ldots,n,
\end{align*}
where $z_0$ is the latent group label of the consumer, and $z_i$ with $i \geq 1$ is the latent group label of data provider $i$. 
The set of regression parameters is $\{w_1,\ldots,w_K\}$.
This way, data providers with $z_i=z_0$ should have the same true parameter as the data consumer, thus is more valuable.
We set $K=4$ and $z_0=1$. 
We let $d=1000$ and generate each entry of $w_k$ i.i.d.~from $U[0.5(k-1), 0.5 k]$ for $k=1,\ldots,4$. In addition, we generate entries of $x_{ij}$ i.i.d.~from $N(0,1)$ and
we initialize the parameter from $(-1,\ldots,-1)^{\top}$.
See details in Appendix~\ref{sec:detail-mixture-linear-exp}.

We compare the budget allocation and revenue allocation of our method with other baselines.
The results are shown in Figure~\ref{fig:mix-linear-reg}. Regarding budget allocation, we see that both OSMD and FedSV-PS generate fair revenue allocation, by which we mean that data providers with smaller $\Vert w_{z_i} - w_{z_0} \Vert$ are compensated higher. Interestingly, we see that the allocation result of G-Shapley is opposite to the truth. We conjecture this is because providers with larger $\Vert w_{z_i} - w_{z_0} \Vert$ will make gradients evaluated at the initial point have larger norm, and thus achieve more utility gain at the initial stage of the training. This is another example in addition to the previous CIFAR-10 case where G-Shapley fails. Regarding budget allocation, we see that there is a clear phase transition. Initially, since providers from all groups help, the parameter estimates get closer to the true parameter. All providers seem to have similar values, thus OSMD and uniform sampling perform similarly. As the parameter estimation gets closer to the true parameter, there is an increasing difference between providers, and OSMD starts to tell apart the ``right'' providers from the ``wrong'' providers. This experiment demonstrates the point raised in Section~\ref{sec:OSMD} about non-stationarity: that providers whose data is useful at a point in time may not be useful later. The take-away message is that taking the non-stationarity of the problem into consideration is necessary for the algorithm to perform well.

\begin{figure}[t!]
\centering
\includegraphics[width=\columnwidth]{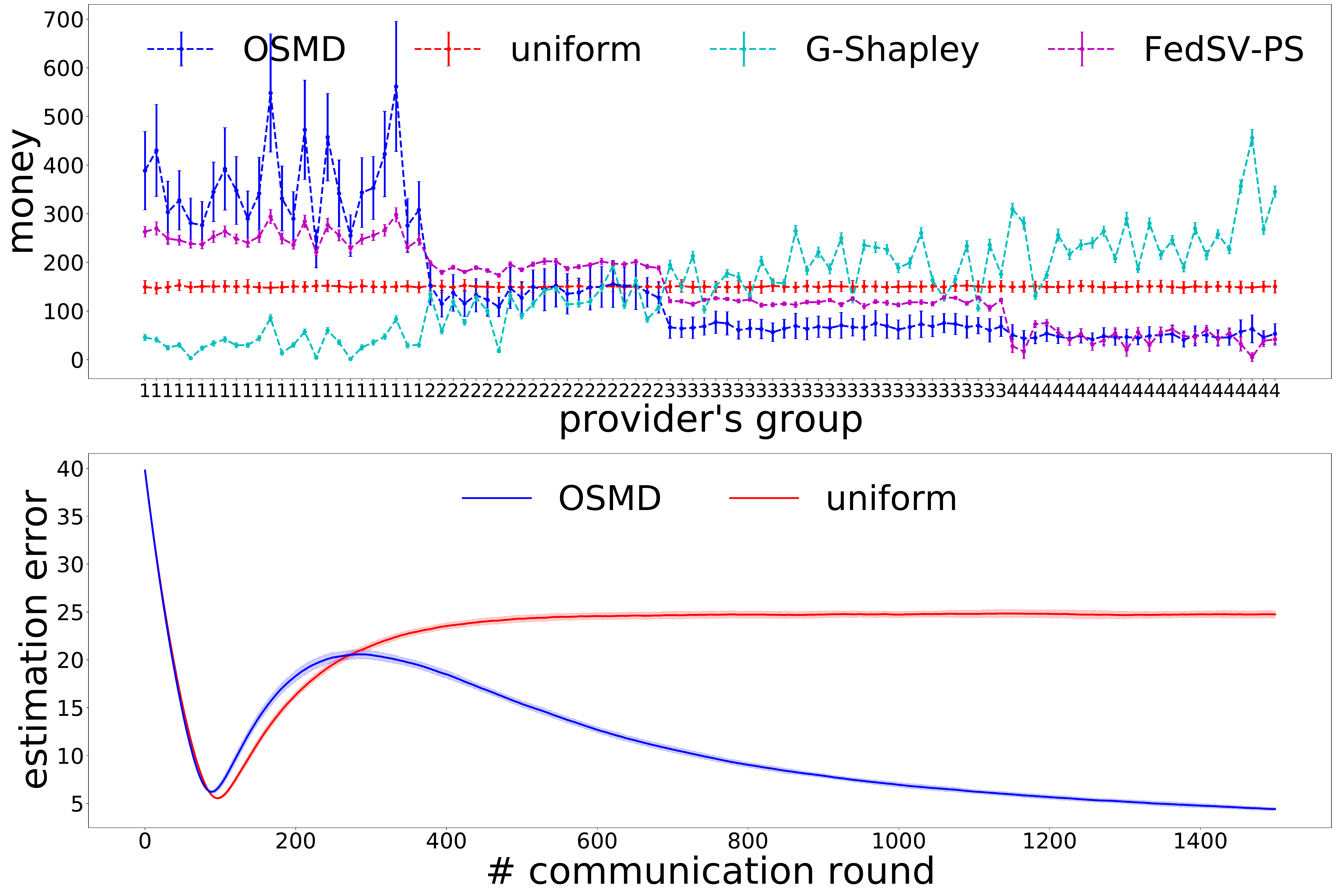}
\caption{The result of mixture linear regression experiment. The above figure shows the revenue allocation result. Note that the consumer's parameter is from group $1$. The below figure shows the budget allocation result.}
\label{fig:mix-linear-reg}
\vskip -0.1in
\end{figure}

\section{Conclusion}
\label{sec:conclusion}

We proposed a new algorithm based on adaptive sampling that solves the problems of budget allocation and revenue allocation in linear time, thus paving the way to the implementation of practical data markets. The new algorithm achieves state-of-the-art revenue allocation performance while ensuring efficient budget allocation performance at the same time. The key insight of the algorithm design is to open the black box of the training process, and use an adaptive sampling technique to encourage more valuable providers to make more contributions. We offer theoretical guarantees and an empirical evaluation of the algorithm.

There are many interesting future directions. 
First, the problem of pricing each query of model update remains open.
Second, while our algorithm can be easily extended to serve multiple consumers, it is worth noting that in practice, different consumers may cooperate or compete with each other. For example, the value of a machine learning model possessed by a company may not only depend on its own accuracy, but may also depend on the performance of the model possessed by the company's competitor. The study of such interactions between multiple consumers provides a fruitful research direction.
In addition, one may also consider the effect of strategic behavior, especially if providers can collude. 
Finally, combining differential privacy with our method can also be a promising direction.

\section*{Acknowledgements}

The research of MK is supported in part by NSF Grant ECCS-2216912.
This work was completed in part with resources provided by the University of Chicago Booth’s Mercury Computing Cluster.

\newpage

\bibliography{boxinz-papers}
\bibliographystyle{icml2023}

\newpage
\appendix
\onecolumn

\section{Details about Experiments}

\subsection{Details about Real-World Experiments}
\label{sec:detail-real-world-exp}

\textbf{Dataset:} We implement the experiments on four image classification task datasets: MNIST~\citep{lecun-mnisthandwrittendigit-2010}, Fashion-MNIST (FMNIST)~\citep{xiao2017fashion}, Kuzushiji-MNIST (KMNIST)~\citep{clanuwat2018deep}, and CIFAR-10~\citep{krizhevsky2009learning}. All datasets are composed of training set and testing set. The sample size information is summarized in Table~\ref{table:info-img-cls}. Both training set and testing set of all datasets are perfectly balanced, that is, each label contains the equal number of samples.
We create $100$ data providers with each provider having $400$ training samples. We then use $1000$ samples from the rest of the training set as the hold-out set to compute the utility, which is defined as the negative loss on the validation set. Finally, we use the testing set to measure the accuracy of the model.
\begin{table}[h]
\centering
\begin{tabular}{||c c c c||} 
\hline
Dataset & Sample Size of Training Set & Sample Size of Testing Set & Label Balance \\[0.5ex]
\hline\hline
MNIST & $60,000$ & $10,000$ &  Balanced \\ 
KMNIST & $60,000$ & $10,000$ &  Balanced \\
FMNIST & $60,000$ & $10,000$ &  Balanced \\
CIFAR10 & $50,000$ & $10,000$ &  Balanced \\
\hline
\end{tabular}
\caption{The information about image classification datasets.}
\label{table:info-img-cls}
\end{table}

\textbf{Machine Learning Model:} For MNIST, FMNIST and KMNIST, we train a two-layer neural network with one-hidden layer of $300$ neurons; for CIFAR-10, we train a Convolutional Neural Network (CNN) with one convolutional layer and two feed-forward layers. We use the cross-entropy loss as the loss function. 

\textbf{Model Update Oracle:} For a given model parameter $w$, we choose $\mathcal{O}_i(w)=w^+_i - w$, where $w^+_i$ is obtained by running mini-batch SGD on the provider $i$'s data for one epoch. More specifically, we divide the provider's data into mini batches, that is, $\mathcal{D}_i = \cup^L_{l=1} \mathcal{B}_l $, and then let $w^0=w$, $w^{l+1} = w^l - (\gamma_{\text{local}}/B)\nabla l(w^l, \mathcal{B}_l)$ for $l=0,\ldots,L-1$, where $B$ is the size of the mini-batches and $\gamma_{\text{local}}$ is the local step size. Finally, we let $w^+_i=w^L$.

\textbf{Parameter Choice:} We choose the optimization stepsizes $\{\gamma_t\}^{T-1}_{t=0}$ in Algorithm~\ref{alg:OSMD-sampler} as $\gamma^t=\gamma_{\text{global}}=0.1$ for all $t=0,\ldots,T-1$. Besides, we let $\alpha=0.01$, and $\eta=1.0$. We set the total number of communication rounds between the arbiter and providers as $T=500$ for MNIST, FMNIST and KMNIST, and as $T=1000$ for CIFAR10. Furthermore, we have $K=0.1 n$. 
For G-Shapley, we choose the same convergence criterion as in~\citet{ghorbani2019data}; however, we set the back check length to be $30$ for CIFAR-10 and to be $10$ for the rest of tasks, and we set the convergence threshold as $0.01$. For FedSV-PS, we use the same convergence criterion to decide the determination in each communication round. The back check length is $10$ and the convergence threshold is $0.05$. In addition, each method is repeated $10$ times with different random seeds. 

\textbf{Corruption:} To create quality discrepancies across providers, we corrupt $\beta\%$ of the providers' samples by changing a sample's correct label to a randomly chosen wrong label. For provider $1$--$10$, $\beta=0$; for provider $11$--$20$, $\beta=20$; for provider $21$--$30$, $\beta=50$; for the rest of the providers, $\beta=90$.

\textbf{Computational Cost Experiment:} To compare the computational cost of our method with other competitors, we fix the number of training samples possessed by each provider as $100$, and set the number of providers as $50$, $100$, $200$ and $400$, and record the runtime.

\subsection{Details about Mixture Linear Regression Experiments}
\label{sec:detail-mixture-linear-exp}

\textbf{Data Generation Process:} Both data consumer's and data providers' features are drawn from the same distribution $\mathbb{P}_X$; however, the response variable is generated by the following mixture linear regression model:
\begin{align*}
& x_{ij} \in \mathbb{R}^d \sim \mathbb{P}_X \quad \text{ for all } j = 1, \ldots, m_i, i=0,1,\ldots,n, \\
& z_0,z_1,\ldots,z_n \sim [K] \text{ uniformly random}, \\
& y_{ij} = \langle w_{z_i}, x_{ij} \rangle + \varepsilon_{ij}, \, \varepsilon_{ij} \overset{i.i.d.}{\sim} N(0,0.5^2) \quad \text{ for all } j = 1, \ldots, m_i, i=0,1,\ldots,n,
\end{align*}
where $z_0$ is the latent group label of the consumer, and $z_i$ with $i \geq 1$ is the latent group label of data provider $i$. 
The set of regression parameters is $\{w_1,\ldots,w_K\}$.
This way, data providers with $z_i=z_0$ should have the same true parameter as the data consumer, thus is more valuable.
We set $K=4$ and $z_0=1$. Besides, we let $d=1000$ and generate each entry of $w_k$ i.i.d. from $U[0.5(k-1), 0.5 k]$ for $k=1,\ldots,4$. In addition, we generate entries of $x_{ij}$ i.i.d. from $N(0,1)$ and
we initialize the parameter from $(-1,\ldots,-1)^{\top}$. Given a parameter estimation $\hat{w}$, the estimation error is measured by $\Vert \hat{w} - w_1 \Vert$.

\textbf{Training Process:} For model update oracle, we let $\mathcal{O}_i(w)=-\nabla l (w, \mathcal{D}_i)$, where $\mathcal{D}_i=\{(x_{ij}, y_{ij})\}^{m_i}_{j=1}$ and 
\begin{equation*}
l (w, \mathcal{D}_i)=\frac{1}{2 m_i} \sum^{m_i}_{j=1} \left( y_{ij} - \langle w,x_{ij}\rangle \right)^2.
\end{equation*}
Besides, we choose the optimization stepsizes $\{\gamma_t\}^{T-1}_{t=0}$ in Algorithm~\ref{alg:OSMD-sampler} as $\gamma^t=\gamma_{\text{global}}=0.01$ for all $t=0,\ldots,T-1$. Besides, we let $\alpha=0.01$, and $\eta=0.001$. We set the total number of communication rounds between the arbiter and providers as $T=1500$. Furthermore, we have $K=0.1 n$. The parameter setting of G-Shapley and FedSV-PS are the same as in Section~\ref{sec:detail-real-world-exp}.

\section{Technical proofs}

\subsection{Efficient Solution to \eqref{eq:OSMD-update}}
\label{sec:proof-prop-1}

The following proposition justifies Algorithm~\ref{alg:OSMD-sampler-solver}.

\begin{proposition}
\label{prop:OSMD-solver}
Let 
\[
\tilde{p}^{t+1}_i = p^t_i \exp \left( \eta \hat{u}^t_i  \right), \qquad i \in [n].
\]
Let $\pi:[n]\mapsto[n]$ be a permutation such that $\tilde{p}^{t+1}_{\pi(1)}\leq\dots\leq\tilde{p}^{t+1}_{\pi(n)}$. Let $i^t_{\star}$ be the smallest integer $i$ such that
\begin{equation*}
\tilde{p}^{t+1}_{\pi(i)} \left( 1 -  \frac{i-1}{n} \alpha \right) > \frac{\alpha}{n} \sum^n_{j=i} \tilde{p}^{t+1}_{\pi(j)}.
\end{equation*}
Then the solution of~\eqref{eq:OSMD-update} is
\begin{equation*}
p^{t+1}_i =
\begin{cases}
\alpha / n & \text{if } \pi^{-1}(i) < i^t_{\star} \\
\left((1 - ((i^t_{\star}-1) / n) \alpha ) \tilde{p}^{t+1}_{i}\right)/\left(\sum^M_{j=i^t_{\star}} \tilde{p}^{t+1}_{\pi(j)}\right) & \text{otherwise}.
\end{cases}
\end{equation*}
\end{proposition}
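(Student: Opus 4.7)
The plan is to recognize \eqref{eq:OSMD-update} as a strictly convex program over a convex set (the intersection of the simplex with an axis-aligned half-space) and solve it via the Karush--Kuhn--Tucker (KKT) conditions. Since $\Phi(x)=\sum_i x_i\log x_i$ is strictly convex on $\mathbb{R}_+^n$, so is $q \mapsto D_\Phi(q\,\Vert\, p^t)$, and therefore the objective has a unique minimizer on the nonempty convex polytope $\mathcal{A}$, so it suffices to exhibit a feasible point satisfying KKT.

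First I would form the Lagrangian
\[
\mathcal{L}(q,\lambda,\mu) = -\eta\langle q,\hat u^t\rangle + D_\Phi(q\,\Vert\,p^t) - \lambda\Bigl(\sum_i q_i - 1\Bigr) - \sum_i \mu_i\bigl(q_i - \alpha/n\bigr),
\]
with $\mu_i \geq 0$. Stationarity $\partial\mathcal{L}/\partial q_i = 0$ gives $\log(q_i/p^t_i) = \eta\hat u^t_i + \lambda + \mu_i$, hence $q_i = \tilde p^{t+1}_i\, e^{\lambda}\, e^{\mu_i}$ where $\tilde p^{t+1}_i = p^t_i \exp(\eta \hat u^t_i)$. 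Complementary slackness then splits the indices into a \emph{free} set $F = \{i : \mu_i = 0\}$ on which $q_i = C\tilde p^{t+1}_i$ (with $C = e^\lambda$) and a \emph{clamped} set on which $q_i = \alpha/n$.

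Next I would argue the structural monotonicity: if $i$ is clamped and $j$ is free, then $\mu_i>0$ forces $\tilde p^{t+1}_i e^\lambda < \alpha/n$, while $\mu_j=0$ with feasibility gives $\tilde p^{t+1}_j e^\lambda \geq \alpha/n$, so $\tilde p^{t+1}_i < \tilde p^{t+1}_j$. Consequently, once the $\tilde p^{t+1}_i$ are sorted via $\pi$ in non-decreasing order, the clamped indices form a prefix $\{\pi(1),\ldots,\pi(i^\star-1)\}$ and the free indices form the complementary suffix. Imposing $\sum_i q_i = 1$ then pins down the normalizing constant as
\[
C = \frac{1 - (i^\star-1)\alpha/n}{\sum_{j=i^\star}^n \tilde p^{t+1}_{\pi(j)}},
\]
which matches the closed form in the proposition.

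Finally I would determine $i^\star$. Two consistency checks must hold: (a) the free indices actually satisfy $q_{\pi(i)} = C\tilde p^{t+1}_{\pi(i)} \geq \alpha/n$ for $i \geq i^\star$, and (b) the clamped indices would have fallen below $\alpha/n$ if forced into the free group, so $C\tilde p^{t+1}_{\pi(i)} < \alpha/n$ for $i < i^\star$. Substituting the expression for $C$ and rearranging shows both conditions are equivalent to the statement that $i^\star$ is the smallest index $i$ for which $v_i = \tilde p^{t+1}_{\pi(i)}(1 - (i-1)\alpha/n) > (\alpha/n)\sum_{j=i}^n \tilde p^{t+1}_{\pi(j)} = u_i$; monotonicity of $v_i/u_i$ in $i$ (which I would verify by a short direct computation, noting the numerator is non-decreasing in $\pi(i)$ while the denominator is non-increasing as terms drop off) ensures such an $i^\star$ exists and is the unique threshold consistent with KKT. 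The main technical obstacle is this last monotonicity/existence step: one needs to rule out the degenerate case where every $i$ fails $v_i>u_i$, which is handled by observing that at $i=n$ the ratio $v_n/u_n = 1 + n(1-\alpha)/\alpha \geq 1$, with strict inequality whenever $\alpha<1$, and treating $\alpha=1$ separately as the uniform solution on $\mathcal{A}=\{(1/n,\ldots,1/n)\}$. Combining strict convexity, KKT sufficiency, and this structural characterization completes the proof.
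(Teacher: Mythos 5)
Your proposal is correct and follows essentially the same route as the paper: the paper first reduces \eqref{eq:OSMD-update} to a Bregman projection of $\tilde{p}^{t+1}$ onto $\mathcal{A}$ and then runs the identical KKT analysis in its auxiliary lemma (Lagrange multipliers, complementary slackness, clamped coordinates forming a prefix of the sorted order, and the same threshold test for $i^t_{\star}$), whereas you apply KKT in one shot to the constrained problem --- a purely organizational difference, since the stationarity conditions coincide after absorbing $\eta\hat{u}^t_i$ into $\tilde{p}^{t+1}_i$. One small imprecision: $v_i/u_i$ need not be monotone in $i$; what you actually need (and what your ``short direct computation'' does deliver, since $v_i>u_i$ rearranges to $\tilde{p}^{t+1}_{\pi(i)}(1-i\alpha/n)>u_{i+1}$ and $\tilde{p}^{t+1}_{\pi(i+1)}\geq\tilde{p}^{t+1}_{\pi(i)}$) is only that $\{i: v_i>u_i\}$ is an up-set.
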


\begin{proof}
    
First, we show that the solution of~\eqref{eq:OSMD-update}, $p^{t+1}$, can be found as
\begin{align*}
\check{p}^{t+1} & =  \arg\min_{ q \in \mathbb{R}^n_{+} } \, - \eta \langle q, \hat{u}^t \rangle + D_{\Phi} (q \, \Vert \, p^t), \\
p^{t+1} & =  \arg\min_{ q \in \mathcal{A} } \, D_{\Phi} (q \, \Vert \, \check{p}^{t+1})
\end{align*}
The optimality condition for $\check{p}^{t+1}$ implies that
\begin{equation}\label{eq:proof-prop1-1}
-\eta \hat{u}^t + \nabla \Phi \left( \check{p}^{t+1} \right) - \nabla \Phi \left( p^t \right) = 0
\end{equation}
By Lemma~\ref{lemma:convx-opt-cond}, the optimality condition for $p^{t+1}$ implies that
\begin{equation*}
\langle p - p^{t+1}, \nabla \Phi (p^{t+1}) - \nabla \Phi (\check{p}^{t+1}) \rangle \geq 0, \quad \text{for all } p \in \mathcal{A}.
\end{equation*}
Combining the last two displays, we have
\begin{equation}
\label{eq:proof-prop1-1-h2}
\langle p - p^{t+1}, -\eta \hat{u}^t + \nabla \Phi \left( p^{t+1} \right) - \nabla \Phi \left( p^t \right) \rangle \geq 0, \quad \text{for all } p \in \mathcal{A}.
\end{equation}
By Lemma~\ref{lemma:convx-opt-cond}, this is the optimality condition for $p^{t+1}$ to be the solution of~\eqref{eq:OSMD-update}.
Note that \eqref{eq:proof-prop1-1} implies that
\begin{equation*}
- \eta \hat{u}^t_i + \log \check{p}^{t+1}_i - \log p^t_i = 0 \quad \text{ for all } i \in [n].
\end{equation*}
Therefore, 
\[
\check{p}^{t+1}_i = p^t_i \exp \left( \eta \hat{u}^t_i \right) = \tilde{p}^{t+1}_i \quad \text{ for all } i \in [n].
\]
and the final result follows from Lemma~\ref{lemma:mirror-solver-lemma}.

\end{proof}

\subsection{Proof of Theorem~\ref{thm:regret-bound}}
\label{sec:proof-thm-1}

Our proof is based on \citet[Chapter 31.1]{lattimore2020bandit}.
Recall that we have
\begin{align*}
R_B & = \mathbb{E} \left[ \max_{ (a^t) \in \Gamma_m } K \sum^{T-1}_{t=0} \Delta U^t_{a^t} -  \sum^T_{t=0} \Delta U^t_\text{cm} \right] \\
&= \mathbb{E} \left[ \max_{ (a^t) \in \Gamma_m } \sum^{T-1}_{t=0} \sum^{K-1}_{k=0} \Delta U^t_{a^t} -  \sum^T_{t=0} \sum_{i \in S^t} \Delta U^t_i \right] \\
&= \mathbb{E} \left[ \max_{ (a^t) \in \Gamma_m } \sum^{T-1}_{t=0} \sum^{K-1}_{k=0} \left( \Delta U^t_{a^t} - \Delta U^t_{i^{t,k}} \right) \right].
\end{align*}
Let
\begin{equation*}
\left( a^0_{\star},\ldots,a^{T-1}_{\star} \right) = \argmax_{(a^t) \in \Gamma_m }  \sum^{T-1}_{t=0} \Delta U^t_{a^t} ,
\end{equation*}
then there exists $0=t_1<t_2<\ldots<t_m<t_{m+1}= T-1$ such that for any $j=1,\ldots,m$, $a^t_{\star}$ is a constant when $t_j \leq t <t_{j+1}$.
To simplify the notation, we denote $\bar{a}^j_{\star}=a^{t_j}_{\star}$, we then decompose the regret as
\begin{equation*}
R_B = \mathbb{E} \left[ \sum^m_{j=1} \sum^{t_{j+1}-1}_{t=t_j} \sum^{K-1}_{k=0} \left( \Delta U^t_{\bar{a}^j_{\star}}  - \Delta U^t_{i^{t,k}} \right) \right] = \sum^m_{j=1} \mathbb{E} \left[\mathbb{E} \left[ \sum^{t_{j+1}-1}_{t=t_j} \sum^{K-1}_{k=0} \left( \Delta U^t_{\bar{a}^j_{\star}}  - \Delta U^t_{i^{t,k}} \right)  \,\middle\vert \,\,p^{t_j} \right] \right].
\end{equation*}
Let $u^t=(\Delta U^t_1,\ldots,\Delta U^t_n)^{\top}$. Note that since $U(\cdot) \in [0,1]$, we have $\Delta U^t_i \in [-1,1]$ for all $i \in [n]$. Besides, we have
\begin{align*}
& \quad \mathbb{E} \left[ \sum^{t_{j+1}-1}_{t=t_j} \sum^{K-1}_{k=0} \left( \Delta U^t_{\bar{a}^j_{\star}}  - \Delta U^t_{i^{t,k}} \right) \,\middle\vert \,\,p^{t_j} \right] \\
& = \mathbb{E} \left[ \max_{p \in \mathcal{P}_{n-1}} \sum^{t_{j+1}-1}_{t=t_j} \sum^{K-1}_{k=0}  \left\langle p - p^t, u^t \right\rangle   \,\middle\vert \,\,p^{t_j} \right] \\
& \leq \mathbb{E} \left[ \max_{p \in \mathcal{A} } \sum^{t_{j+1}-1}_{t=t_j} \sum^{K-1}_{k=0}  \left\langle p - p^t, u^t \right\rangle \,\middle\vert \,\,p^{t_j} \right] + K \frac{\alpha}{n}(n-1) (t_{j+1}-t_j) \\
& = \mathbb{E} \left[ \max_{p \in \mathcal{A} } K \sum^{t_{j+1}-1}_{t=t_j} \left\langle p - p^t, \hat{u}^t \right\rangle \,\middle\vert \,\,p^{t_j} \right] + K \alpha\left( 1 - \frac{1}{n} \right) (t_{j+1}-t_j). \numberthis \label{eq:proof-helper-1}
\end{align*}
Since
\begin{equation}
\label{eq:prop-Bregman}
D_{\Phi} \left(x_1 \,\Vert\, x_2 \right) = D_{\Phi} \left(x_3 \,\Vert\, x_2 \right) + D_{\Phi} \left(x_1 \,\Vert\, x_3 \right) + \langle \nabla \Phi(x_2) - \nabla \Phi(x_3), x_3 - x_1 \rangle,
\qquad x_1,x_2,x_3\in\mathcal{D},   
\end{equation}
then by~\eqref{eq:proof-prop1-1-h2}, we have
\begin{align*}
\langle p - p^{t+1}, \hat{u}^t \rangle & \leq \frac{1}{\eta} \langle p - p^{t+1}, \nabla \Phi \left( p^{t+1} \right) - \nabla \Phi \left( p^t \right) \rangle \\
& = \frac{1}{\eta} \left( D_{\Phi} \left( p \,\Vert\, p^t \right) - D_{\Phi} \left( p \,\Vert\, p^{t+1} \right) - D_{\Phi} \left( p^{t+1} \,\Vert\, p^t \right)  \right).
\end{align*}
Thus, we have
\begingroup
\allowdisplaybreaks
\begin{align*}
& \quad \sum^{t_{j+1}-1}_{t=t_j}  \left\langle p - p^t, \hat{u}^t \right\rangle \\
& = \sum^{t_{j+1}-1}_{t=t_j}  \left\langle p^{t+1} -p^t, \hat{u}^t \right\rangle + \sum^{t_{j+1}-1}_{t=t_j} \left\langle p -p^{t+1}, \hat{u}^t \right\rangle \\
& \leq \sum^{t_{j+1}-1}_{t=t_j}  \left\langle p^{t+1} -p^t, \hat{u}^t \right\rangle  + \frac{1}{\eta} \sum^{t_{j+1}-1}_{t=t_j} D_{\Phi} \left( p \,\Vert\, p^t \right) - D_{\Phi} \left( p \,\Vert\, p^{t+1} \right) - D_{\Phi} \left( p^{t+1} \,\Vert\, p^t \right) \\
& = \sum^{t_{j+1}-1}_{t=t_j} \left\langle p^{t+1} -p^t, \hat{u}^t \right\rangle + \frac{1}{\eta} \left( D_{\Phi} \left( p \,\Vert\, p^{t_j} \right) - D_{\Phi} \left( p \,\Vert\, p^{t_{j+1}} \right) \right) - \frac{1}{\eta}  \sum^{t_{j+1}-1}_{t=t_j} D_{\Phi} \left( p^{t+1} \,\Vert\, p^t \right) \\
& \leq  \sum^{t_{j+1}-1}_{t=t_j}   \left\langle p^{t+1} -p^t, \hat{u}^t \right\rangle + \frac{1}{\eta} D_{\Phi} \left( p \,\Vert\, p^{t_j} \right)  - \frac{1}{\eta}  \sum^{t_{j+1}-1}_{t=t_j}  D_{\Phi} \left( p^{t+1} \,\Vert\, p^t \right) \numberthis \label{eq:proof-helper-2}
\end{align*}
\endgroup
By~\eqref{eq:proof-prop1-1}, we have
\begin{equation*}
\hat{u}^t = \frac{1}{\eta} \left(\nabla \Phi \left( \tilde{p}^{t+1} \right) - \nabla \Phi \left( p^t \right) \right),
\end{equation*}
and since~\eqref{eq:prop-Bregman},
we have
\begin{align*}
\left\langle p^{t+1} -p^t, \hat{u}^t \right\rangle & = \frac{1}{\eta} \left\langle p^{t+1} -p^t, \nabla \Phi \left( \tilde{p}^{t+1} \right) - \nabla \Phi \left( p^t \right)  \right\rangle \\
& = \frac{1}{\eta} \left( D_{\Phi} \left( p^t \,\Vert\, \tilde{p}^{t+1} \right)  + D_{\Phi} \left( p^{t+1} \,\Vert\, p^t \right) - D_{\Phi} \left( p^{t+1} \,\Vert\, \tilde{p}^{t+1} \right) \right) \\
& \leq \frac{1}{\eta} \left( D_{\Phi} \left( p^t \,\Vert\, \tilde{p}^{t+1} \right)  + D_{\Phi} \left( p^{t+1} \,\Vert\, p^t \right)  \right). \numberthis \label{eq:proof-helper-3}
\end{align*}
Combine \eqref{eq:proof-helper-2} and~\eqref{eq:proof-helper-3}, we have
\begin{equation}
\label{eq:proof-helper-4}
\sum^{t_{j+1}-1}_{t=t_j}  \left\langle p -p^t, \hat{u}^t \right\rangle \leq \frac{1}{\eta} \left( D_{\Phi} \left( p \,\Vert\, p^{t_j} \right) + \sum^{t_{j+1}-1}_{t=t_j} D_{\Phi} \left( p^t \,\Vert\, \tilde{p}^{t+1} \right) \right).
\end{equation}
Note that
\begin{equation*}
D_{\Phi} \left( p^t \,\Vert\, \tilde{p}^{t+1} \right) = \sum^n_{i=1} p^t_i \left( \exp \left( \eta \hat{u}^t_i \right) - 1 - \eta \hat{u}^t_i \right) \leq \frac{\eta^2}{2} \sum^n_{i=1} p^t_i (\hat{u}^t_i)^2 \leq \frac{\eta^2 n}{2},
\end{equation*}
thus we have
\begin{equation}
\label{eq:proof-helper-5}
\frac{1}{\eta} \sum^{t_{j+1}-1}_{t=t_j} D_{\Phi} \left( p^t \,\Vert\, \tilde{p}^{t+1} \right) \leq \frac{\eta n (t_{j+1}-t_j)}{2}.
\end{equation}
Combine \eqref{eq:proof-helper-1}, \eqref{eq:proof-helper-4}, and \eqref{eq:proof-helper-5}, we have
\begin{align*}
\mathbb{E} \left[ \sum^{t_{j+1}-1}_{t=t_j} \sum^{K-1}_{k=0} \left( \Delta U^{t,k}_{\bar{a}^j_{\star}}  - \Delta U^{t,k}_{i^{t,k}} \right) \,\middle\vert \,\,p^{t_j} \right] & \leq K \alpha\left( 1 - \frac{1}{n} \right) (t_{j+1}-t_j) + \frac{\eta n K (t_{j+1}-t_j)}{2} \\
& + K \mathbb{E} \left[ \max_{p \in \mathcal{A}} \frac{ D_{\Phi} \left( p \,\Vert\, p^{t_j} \right) }{\eta} \,\middle\vert \,\,p^{t_j}  \right].
\end{align*}
Note that $p^{t_j} \in \mathcal{A}$, thus we have $p^{t_j}_i \geq \alpha / n$ for all $i \in [n]$ and
\begin{equation*}
D_{\Phi} \left( p \,\Vert\, p^{t_j} \right) = \sum^n_{i=1} p_i \log p_i + \sum^n_{i=1} p_i \log \left( \frac{1}{p^{t_j}_i}  \right) \leq \log (n/\alpha),
\end{equation*}
which implies that
\begin{equation*}
\mathbb{E} \left[ \sum^{t_{j+1}-1}_{t=t_j} \sum^{K-1}_{k=0} \left( \Delta U^{t,k}_{\bar{a}^j_{\star}}  - \Delta U^{t,k}_{i^{t,k}} \right)  \,\middle\vert \,\,p^{t_j} \right] \leq K \alpha\left( 1 - \frac{1}{n} \right) \left( t_{j+1}-t_j \right) + \frac{\eta n K (t_{j+1}-t_j)}{2} + \frac{K \log(n/\alpha)}{\eta},
\end{equation*}
and
\begin{equation*}
R_B \leq \alpha K T  + \frac{\eta n K T}{2} + \frac{m K \log(n/\alpha)}{\eta}.
\end{equation*}
The final result then follows by $B=KT$.

\section{Useful lemmas}

\begin{lemma}\label{lemma:convx-opt-cond}
Suppose that $f$ is a differentiable convex function defined on $\text{dom}f$, and $\mathcal{X} \subseteq \text{dom}f$ is a closed convex set. Then $x$ is the minimizer of $f$ on $\mathcal{X}$ if and only if
\begin{equation*}
\nabla f (x)^{\top} (y-x) \geq 0 \quad \text{for all } y \in \mathcal{X}. 
\end{equation*}
\end{lemma}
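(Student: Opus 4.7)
The plan is to prove the two directions separately, using convexity of $f$ for sufficiency and convexity of $\mathcal{X}$ together with a directional derivative argument for necessity.

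For the sufficiency direction ($\Leftarrow$), I would assume that $\nabla f(x)^{\top}(y-x) \geq 0$ for all $y \in \mathcal{X}$, and then invoke the standard first-order characterization of a differentiable convex function: for every $y \in \text{dom} f$,
\begin{equation*}
f(y) \geq f(x) + \nabla f(x)^{\top}(y - x).
\end{equation*}
Restricting to $y \in \mathcal{X} \subseteq \text{dom} f$ and combining with the assumed inequality immediately yields $f(y) \geq f(x)$, so $x$ minimizes $f$ over $\mathcal{X}$.

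For the necessity direction ($\Rightarrow$), I would assume $x$ is a minimizer of $f$ on $\mathcal{X}$ and fix an arbitrary $y \in \mathcal{X}$. The key move is to use convexity of $\mathcal{X}$: for every $t \in [0,1]$, the point $x_t := x + t(y - x) = (1-t)x + ty$ lies in $\mathcal{X}$, so $f(x_t) \geq f(x)$. Dividing by $t > 0$ gives
\begin{equation*}
\frac{f(x + t(y-x)) - f(x)}{t} \geq 0,
\end{equation*}
and letting $t \to 0^+$, the left-hand side converges to the directional derivative $\nabla f(x)^{\top}(y - x)$ by differentiability of $f$ at $x$. Hence $\nabla f(x)^{\top}(y - x) \geq 0$, and since $y \in \mathcal{X}$ was arbitrary, this completes the direction.

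Neither direction involves any real obstacle; the only subtlety worth flagging is the use of convexity of $\mathcal{X}$ in the necessity direction to ensure the segment $x_t$ stays feasible (so that the inequality $f(x_t) \geq f(x)$ actually applies), and the use of differentiability (not merely one-sided differentiability) of $f$ to identify the limit of the difference quotient with $\nabla f(x)^{\top}(y-x)$. Closedness of $\mathcal{X}$ is not needed in this argument; it is typically included to guarantee existence of a minimizer, which is not part of what we are asked to prove here.
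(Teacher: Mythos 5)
Your proof is correct and is exactly the standard first-order optimality argument: the paper itself does not prove this lemma but simply cites Section 4.2.3 of Boyd and Vandenberghe, and your two directions (first-order convexity inequality for sufficiency, feasible line segment plus directional-derivative limit for necessity) reproduce that textbook proof. Your side remarks — that convexity of $\mathcal{X}$ is what keeps $x_t$ feasible and that closedness is not actually needed for the equivalence — are also accurate.
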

\begin{proof}
See Section 4.2.3 of \citet{boyd2004convex}.
\end{proof}

\begin{lemma}[Based on Exercise 26.12 of \citet{lattimore2020bandit} and Lemma 4 of~\citet{zhao2021adaptive}]
\label{lemma:mirror-solver-lemma}
Let $\alpha \in [0,1]$, $\mathcal{A}=\mathcal{P}_{n-1} \cap [ \alpha/n, 1 ]^n$. For $y \in [0,\infty)^n$, let $x=\arg\min_{v \in \mathcal{A}} D_{\Phi}(v \Vert y)$. Suppose $y_1\leq y_2 \leq \dots \leq y_n$. Let $i^{\star}$ be the smallest value such that
\begin{equation*}
y_{i^{\star}} \left( 1 -  \frac{ i^{\star}-1 }{n} \alpha \right) > \frac{  \alpha}{n} \sum^n_{i=i^{\star}} y_i.
\end{equation*}
Then 
\begin{equation*}
x_i = 
\begin{dcases*}
\alpha / n & 
if $i < i^{\star}$ \\
\frac{(1 - \frac{i^{\star} -1}{n} \alpha ) y_i}{\sum^n_{j=i^{\star}} y_j } & otherwise.
\end{dcases*}
\end{equation*}
\end{lemma}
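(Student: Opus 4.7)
The plan is to prove the lemma via the KKT conditions for the strictly convex problem $\min_{v \in \mathcal{A}} D_{\Phi}(v \Vert y)$. Since $\Phi(v)=\sum_i v_i \log v_i$ is strictly convex and $\mathcal{A}$ is a non-empty compact convex set (when $\alpha \leq 1$), the minimizer $x$ exists and is unique, so it suffices to exhibit primal-dual variables satisfying KKT. Write the Lagrangian with multiplier $\lambda \in \mathbb{R}$ for the equality constraint $\sum_i v_i = 1$ and multipliers $\mu_i \geq 0$ for the box constraints $v_i \geq \alpha/n$. Since $\partial_{v_i} D_{\Phi}(v \Vert y) = \log(v_i/y_i)$, stationarity gives $\log(x_i/y_i) + \lambda - \mu_i = 0$, i.e., $x_i = y_i e^{\mu_i - \lambda}$.

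Next I would exploit complementary slackness to classify coordinates. Setting $c \coloneqq e^{-\lambda}$, on the inactive set (where $\mu_i = 0$, $x_i > \alpha/n$) the solution reads $x_i = c y_i$, while on the active set $x_i = \alpha/n$ with $\mu_i \geq 0$, equivalently $cy_i \leq \alpha/n$. Because $y$ is sorted in ascending order, the set of indices where $c y_i \leq \alpha/n$ is a prefix $\{1,\ldots,k-1\}$ and the inactive set is the suffix $\{k,\ldots,n\}$ for some cutoff $k$. Plugging into the simplex constraint $\sum_i x_i = 1$ solves for $c$ in closed form:
\begin{equation*}
c = \frac{1 - ((k-1)/n)\alpha}{\sum_{j=k}^n y_j}.
\end{equation*}
Thus it only remains to identify $k$.

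The required feasibility conditions are (a) $c y_k > \alpha/n$ (the inactive-set inequality at the boundary, which by monotonicity implies $c y_j \geq cy_k > \alpha/n$ for all $j \geq k$) and (b) $c y_{k-1} \leq \alpha/n$ (which implies the same for all $j < k$). Condition (a) rearranges to exactly the inequality defining $i^{\star}$ in the lemma statement:
\begin{equation*}
y_k\Bigl(1 - \frac{k-1}{n}\alpha\Bigr) > \frac{\alpha}{n}\sum_{j=k}^n y_j.
\end{equation*}
I would then argue that choosing $k = i^{\star}$, the smallest such index, automatically yields (b). The key algebraic step: the defining inequality failing at $i^{\star}-1$ gives $y_{i^{\star}-1}(1 - (i^{\star}-2)\alpha/n) \leq (\alpha/n)\sum_{j=i^{\star}-1}^n y_j$; splitting off the $j = i^{\star}-1$ term and simplifying yields $y_{i^{\star}-1}(1 - (i^{\star}-1)\alpha/n) \leq (\alpha/n)\sum_{j=i^{\star}}^n y_j$, which is precisely condition (b). Together (a) and (b) confirm that the candidate primal $x$, together with the implied $\mu_i = \log(\alpha/(n c y_i)) \geq 0$ for $i < i^{\star}$ and $\mu_i = 0$ otherwise, satisfies KKT; by uniqueness this is the minimizer.

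The main obstacle is not analytic but structural: justifying that the active set must be a sorted prefix. This is the pivot that reduces the combinatorial choice of active set to a single cutoff parameter, and I would treat it carefully by showing that if $i<j$ with $y_i \leq y_j$, then $x_j = \alpha/n$ forces $x_i = \alpha/n$ as well (otherwise the pair $(x_i, x_j) = (c y_i, \alpha/n)$ with $c y_i > \alpha/n \geq c y_j$ would contradict $y_i \leq y_j$). The only other minor point is verifying existence of $i^{\star}$: at $i=n$ the inequality becomes $1 > \alpha$, so the definition is well-posed for $\alpha < 1$, while $\alpha = 1$ collapses $\mathcal{A}$ to the single point $(1/n,\ldots,1/n)$, for which both branches of the stated formula agree. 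Edge cases where some $y_i = 0$ fall into the active set automatically, consistent with the formula.
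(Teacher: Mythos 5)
Your proposal is correct and follows essentially the same route as the paper: both characterize the entropic projection onto the clipped simplex via KKT/Lagrangian conditions, identify the active set as a prefix of the sorted coordinates, solve for the normalizing constant from the simplex constraint, and pin down the cutoff via the stated inequality. The only differences are organizational --- you verify the candidate with cutoff $i^{\star}$ and obtain dual feasibility below $i^{\star}$ directly from the minimality of $i^{\star}$ (the inequality failing at $i^{\star}-1$) plus monotonicity of $y$, whereas the paper derives the cutoff from the assumed solution structure and establishes the failure of the inequality below $i^{\star}$ through a chain of bounds involving the ordered dual variables $\nu_i$; your version also makes explicit the prefix-structure argument that the paper leaves as an assumption.
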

\begin{proof}
Consider the following constrained optimization problem:
\begin{align*}
\min_{ u \in [0,\infty)^n } \, & \sum^n_{i=1} u_i \log \frac{u_i}{y_i}, \\
\text{s.t. } & \sum^n_{i=1} u_i = 1, \\
& u_i \geq \frac{\alpha}{n}, \qquad i \in [n].
\end{align*}
Since $x$ is the solution to this problem, by the optimality condition, there exists $\lambda,\nu_1,\dots,\nu_n \in \mathbb{R}$ such that
\begin{align}
\log \frac{x_i}{y_i} + 1- \lambda - \nu_i & =  0, 
\qquad i \in [n], \label{eq:solv-lemma-1} \\
\sum^n_{i=1} x_i & = 1, \label{eq:solv-lemma-2} \\
x_i - \frac{\alpha}{n} & \geq 0, \qquad i \in [n], \label{eq:solv-lemma-3} \\
\nu_i & \geq 0,  \qquad i \in [n], \label{eq:solv-lemma-4} \\
\nu_i \left( x_i - \frac{\alpha}{n} \right) &= 0, 
\qquad i \in [n]. \label{eq:solv-lemma-5}
\end{align}
By~\eqref{eq:solv-lemma-1}, we have $x_i=y_i \exp ( -1 + \lambda + \nu_i )$. By~\eqref{eq:solv-lemma-4} and \eqref{eq:solv-lemma-5}, when $x_i=\alpha/n$, we have $x_i=y_i \exp ( -1 + \lambda + \nu_i ) \geq y_i \exp ( -1 + \lambda )$; when $x_i>\alpha/n$, we have $x_i = y_i \exp ( -1 + \lambda )$.
Assume that $x_1 = \dots = x_{i^{\star}-1} = \alpha/n < x_{i^{\star}} \leq \dots \leq x_n$. Then
\begin{equation*}
1 = \sum^n_{i=1} x_i = ( i^{\star} -1 ) \frac{\alpha}{n} + \exp (-1+\lambda) \cdot \sum^n_{i=i^{\star}} y_i,
\end{equation*}
which implies that
\begin{equation}\label{eq:lemma4-proof-helper1}
\exp (-1+\lambda) = \frac{ 1 - ( i^{\star} -1 ) \frac{\alpha}{n}  }{  \sum^n_{i=i^{\star}} y_i }.
\end{equation}
Thus, we have
\begin{equation*}
x_{i^{\star}} = y_{i^{\star}} \exp (-1+\lambda) = y_{i^{\star}} \frac{ 1 - ( i^{\star} -1 ) \frac{\alpha}{n}  }{  \sum^n_{i=i^{\star}} y_i } > \frac{\alpha}{n}, 
\end{equation*}
which implies that
\begin{equation}\label{eq:solv-lemma-6}
y_{i^{\star}} \left( 1 -  \frac{ i^{\star}-1 }{n} \alpha \right) > \frac{  \alpha}{n} \sum^n_{i=i^{\star}} y_i .
\end{equation}

To complete the proof, we then only need to show that
\begin{equation}\label{eq:solv-lemma-7}
y_{i^{\prime}} \left( 1 -  \frac{ i^{\prime}-1 }{n} \alpha \right) \leq \frac{  \alpha}{n} \sum^n_{i=i^{\prime}} y_i
\end{equation}
for all $1 \leq i^{\prime} \leq i^{\star}-1$. The result then follows from~\eqref{eq:solv-lemma-6} and \eqref{eq:solv-lemma-7}.
To prove\eqref{eq:solv-lemma-7}, recall that for any $1 \leq i^{\prime} \leq i^{\star}-1$, we have $\alpha/n=y_{i^{\prime}} \exp ( -1 + \lambda + \nu_{i^{\prime}} )$, and because $y_1 \leq \cdots \leq y_n$, we have $\nu_1 \geq \cdots \geq \nu_{i^{\star}-1}$. This way, we have
\begin{equation}\label{eq:lemma4-proof-helper2}
\left( i^{\star}-i^{\prime} \right) \frac{\alpha}{n} = \sum^{i^{\star}-1}_{i=i^{\prime}} y_i \exp \left( -1 + \lambda + \nu_i \right) \leq \exp \left( -1 + \lambda + \nu_{i^{\prime}} \right) \sum^{i^{\star}-1}_{i=i^{\prime}} y_i.
\end{equation}
On the other hand, by~\eqref{eq:lemma4-proof-helper1}, we have
\begin{equation}\label{eq:lemma4-proof-helper3}
1 - ( i^{\star} -1 ) \frac{\alpha}{n} = \exp (-1+\lambda) \sum^n_{i=i^{\star}} y_i \leq \exp \left( -1 + \lambda + \nu_{i^{\prime}} \right) \sum^n_{i=i^{\star}} y_i.
\end{equation}
Combining \eqref{eq:lemma4-proof-helper2} and \eqref{eq:lemma4-proof-helper3}, we have
\begin{align*}
\frac{ 1 - ( i^{\prime} -1 ) \frac{\alpha}{n}  }{  \sum^n_{i=i^{\prime}} y_i } &=
\frac{ 1 - ( i^{\star} -1 ) \frac{\alpha}{n} + ( i^{\star} -i^{\prime} ) \frac{\alpha}{n} }{  \sum^n_{i=i^{\prime}} y_i } \\
& \leq \frac{ \exp \left( -1 + \lambda + \nu_{i^{\prime}} \right) \left( \sum^{i^{\star}-1}_{i=i^{\prime}} y_i + \sum^n_{i=i^{\star}} y_i \right)  }{  \sum^n_{i=i^{\prime}} y_i } \\
& = \exp \left( -1 + \lambda + \nu_{i^{\prime}} \right) \\
& = \frac{ \frac{  \alpha}{n} }{ y_{i^{\prime}} }, 
\end{align*}
which then implies~\eqref{eq:solv-lemma-7}.
\end{proof}

\end{document}